\documentclass[journal]{IEEEtran}
\usepackage{amsmath,amssymb,amsthm,url}
\usepackage{graphicx,subfigure}
\usepackage{verbatim}
\usepackage{color}
\usepackage{xcolor,tikz,pgfplots}
\usepackage{siunitx}
\usepackage[hidelinks]{hyperref}
\usepackage{lipsum} % For dummy text
\usepackage{cleveref}
\usepackage{booktabs}
\usepackage{algorithm}
\usepackage{algpseudocode}
\usepackage{adjustbox}
\usepackage{multirow}
\usepackage{cite}
\usepackage{colortbl} 
\pgfplotsset{compat=1.18}

\definecolor{findOptimalPartition}{HTML}{D7191C}
\definecolor{storeClusterComponent}{HTML}{ffb75f}
\definecolor{dbscan}{HTML}{ABDDA4}
\definecolor{constructCluster}{HTML}{2B83BA}

\newcommand{\SO}{\ensuremath{\mathsf{SO(3)}}}

\newcommand{\so}{\ensuremath{\mathfrak{so}(3)}}

\renewcommand{\Re}{\ensuremath{\mathbb{R}}}
\newcommand{\Sph}{\ensuremath{\mathsf{S}}}

\newcommand{\E}{\mathbb{E}}
\newcommand{\grav}{\mathsf{g}}
\newcommand{\mono}{\textrm{mono}}
\renewcommand{\mod}{\textrm{mod}}

\title{
	Equivariant Reinforcement Learning Frameworks for Quadrotor Low-Level Control
}
\author{Beomyeol Yu and Taeyoung Lee
	\thanks{Beomyeol Yu, Taeyoung Lee, Mechanical and Aerospace Engineering, George Washington University, Washington, DC 20052, {\tt \{yubeomyeol,tylee\}@gwu.edu}}%
	\thanks{\textsuperscript{\footnotesize\ensuremath{*}}This research has been supported in part by NSF CNS-1837382, AFOSR MURI FA9550-23-1-0400, and ONR N00014-23-1-2850.}
}

\newtheorem{prop}{Proposition}
\newtheorem{remark}{Remark}

\graphicspath{{./figs/}}

\makeatletter
\def\endthebibliography{%
	\def\@noitemerr{\@latex@warning{Empty `thebibliography' environment}}%
	\endlist
}
\makeatother

\begin{document}
	\allowdisplaybreaks
	\maketitle \thispagestyle{empty} \pagestyle{empty}
	
	% ========================================================== %
	%                         Abstract                           																	 %
	% ========================================================== %
    \begin{abstract}
        Improving sampling efficiency and generalization capability is critical for the successful data-driven control of quadrotor unmanned aerial vehicles (UAVs) that are inherently unstable.
        While various reinforcement learning (RL) approaches have been applied to autonomous quadrotor flight, they often require extensive training data, posing multiple challenges and safety risks in practice. 
        To address these issues, we propose data-efficient, equivariant monolithic and modular RL frameworks for quadrotor low-level control.
        Specifically, by identifying the rotational and reflectional symmetries in quadrotor dynamics and encoding these symmetries into equivariant network models, we remove redundancies of learning in the state-action space.
        This approach enables the optimal control action learned in one configuration to automatically generalize into other configurations via symmetry, thereby enhancing data efficiency.
        Experimental results demonstrate that our equivariant approaches significantly outperform their non-equivariant counterparts in terms of learning efficiency and flight performance.
    \end{abstract}

	% ========================================================== %
	%                        Introduction                																           %
	% ========================================================== %	
	\section{Introduction}
	Achieving precise and robust control for unmanned aerial vehicles presents significant challenges due to their complexity and nonlinearity of the dynamics, and sensitivity to environmental disturbances.
	Reinforcement learning (RL), with its capability to learn data-driven control policies through experience, has emerged as a powerful paradigm for addressing these challenges. 
	Unlike traditional control methods~\cite{bouabdallah2004pid,xu2006sliding,lee2010geometric} that require precise modeling and extensive analysis, model-free RL schemes learn optimal control policies through an interactive learning process without requiring an exact mathematical model.

	Prior works in model-free RL for quadrotor control have predominantly focused on developing an end-to-end, monolithic RL policy designed to manage all aspects of the quadrotor dynamics through a single, generalized RL agent.
	The foundational work by~\cite{hwangbo2017control} introduced a deep RL framework for quadrotor control, integrating low-gain proportional-derivative (PD) controllers for attitude stabilization alongside learned policies.
	To further enhance tracking accuracy, stochastic and deterministic RL policies were explored in~\cite{lopes2018intelligent} and~\cite{shehab2021low}, respectively.

	To bridge the gap between numerical simulations in virtual environments and actual flights in the real world, the transferability of a model-free monolithic control policy was demonstrated by~\cite{molchanov2019sim} without relying on PD controllers.
	Further, autonomous landing on a moving platform has been investigated in~\cite{rodriguez2019deep}, and quadrotor flights through tilted narrow gaps were demonstrated in~\cite{xiao2021flying}, highlighting the potential of RL-based control in dynamic environments.
	Meanwhile, a multi-purpose low-level control policy is presented in \cite{pi2021general}, demonstrating the capability of controlling both quadrotor and hexacopter UAVs with the identical policy, thereby addressing the limitations of model-specific RL control policies. 
	In~\cite{pi2021robust}, disturbance compensation techniques were incorporated to enhance robustness against external disturbances such as wind gusts.
	Additionally, RL-based control policies are benchmarked in~\cite{kaufmann2022benchmark} for agile drone racing. %, concluding that RL policies incorporated with PD controllers outperformed others by directly controlling body rates and thrust.
	
	Despite these advancements, the common monolithic RL policies often exhibit limitations in scenarios requiring precise yaw control, such as high-speed drone racing.
	Most of the prior studies have overlooked yaw control during training and real-world deployment.
	While \cite{kaufmann2023champion} attempted to address this by incorporating yaw error into their reward function, their approach was assisted by a PID controller when transferring their monolithic RL policy to real-world settings.
	Motivated by these limitations, recent works in~\cite{yu2024multi} and \cite{yu2024modular} proposed \textit{modular reinforcement learning} strategies that geometrically decouple translational and yaw dynamics. 
	These modular policies, validated through zero-shot sim-to-real transfer, significantly enhance both learning efficiency and flight performance.
	
	Nonetheless, these symmetry-agnostic RL methods remain inherently data-intensive, relying on deep neural networks that often require extensive training samples to develop robust control policies, leading to slower learning rates.
	This data inefficiency is particularly problematic in aerial vehicles, where data collection poses higher risks and costs than ground-based systems.
	As a result, achieving reliable and efficient RL training for quadrotors, which involve complex, high-dimensional data, remains a significant challenge.
	These limitations have motivated research into sample-efficient RL techniques, including model-based RL~\cite{lambert2019low} and curriculum learning~\cite{eschmann2024learning}.
	
	An alternative, promising approach to enhance data efficiency involves \textit{equivariant learning}, rooted in the foundational principles of geometric learning~\cite{bronstein2017geometric}.
	Equivariant learning focuses on encoding domain-specific symmetries directly into neural network architectures, thereby improving sample efficiency.
	By embedding geometric relationships (e.g., translations, rotations, and permutations) between the input and output data, symmetry-based equivariant networks reduce the redundancy of state-action pairs during exploration.
	Early advancements, including Group equivariant Convolutional Neural Networks (G-CNNs)~\cite{cohen2016group} and Steerable CNNs~\cite{cohen2016steerable}, demonstrated the effectiveness of group convolution layers in capturing symmetries for computer vision tasks.
	Expanding these ideas, a general framework for implementing $\ensuremath{\mathsf{E(2)}}$-equivariant networks was developed in~\cite{weiler2019general}.
	These approaches have been extended to vision-based robotic manipulation tasks in~\cite{wang2022mathrm,wang2022robot}, where embedding equivariance into Q-functions and policies of RL significantly enhanced learning efficiency.
	In \cite{tangri2024equivariant}, $\ensuremath{\mathsf{SO(2)}}$-equivariance was utilized by combining Conservative Q-Learning (CQL) and Implicit Q-Learning (IQL) with symmetry-aware models.
	Additionally, in \cite{van2020mdp,van2021multi}, Markov Decision Process (MDP) homomorphic networks were introduced to exploit symmetries in the joint state-action space of MDPs, specifically reflectional and rotational equivariance, to reduce sample complexity.

	In applications to the quadrotor dynamics, equivariant RL models have demonstrated substantial improvements in sample efficiency and policy generalization.
	In~\cite{huang2023symmetry}, a symmetry-informed model-based RL approach was proposed for the attitude control of the quadrotor, leveraging the symmetry of angular velocity and torque with respect to body-fixed planes.
	Furthermore, in our previous work~\cite{yu2023equivariant}, we developed an $\Sph^1$-equivariant RL framework for quadrotor low-level control that directly maps the quadrotor's state to motor control signals.
	Specifically, we identified a rotational symmetry, where the optimal control represented in the body-fixed frame remains invariant under rotations about the gravity direction.
	This structural property was then embedded into an actor-critic architecture, reducing the dimensionality of the training data by one.
	
    Building on these recent advancements in equivariant RL, this study introduces novel monolithic and modular equivariant RL frameworks for the data-efficient training of low-level quadrotor control policies.
    By leveraging the inherent symmetries of quadrotor dynamics, the proposed methods eliminate redundancies in learning and accelerate convergence.
    Specifically, we present two equivariant RL frameworks: the monolithic equivariant RL that incorporates rotational symmetry, and the modular equivariant RL that exploits both rotational and reflectional symmetries for enhanced yaw control.
    Notably, the modular framework not only improves training efficiency but also addresses the limitations of monolithic architectures in handling coupled control objectives.
    Experimental results demonstrate that the proposed frameworks outperform non-equivariant baselines in terms of learning convergence and control robustness within a given training period.
    This study advances the state-of-the-art in RL-based quadrotor control and establishes a foundation for extending geometric RL methods to other robotic systems with underlying symmetries.

	\begin{figure}[t]
		\centering
		\vspace*{0.8cm}
		\begin{picture}(225,150)
		\put(0,0){\includegraphics[width=0.47\textwidth]{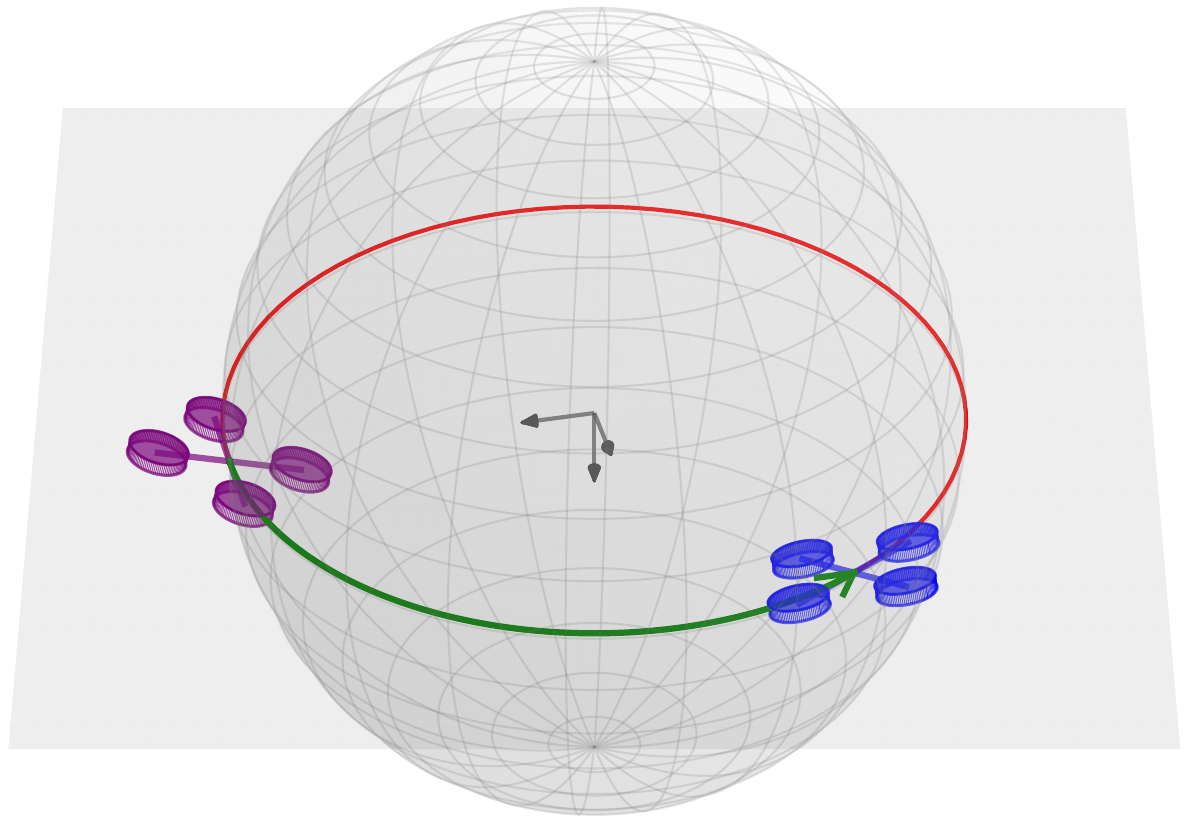}} 
		\put(98,78){{\footnotesize $\vec e_1$}}
		\put(127,72){{\footnotesize $\vec e_2$}}
		\put(118,62){{\footnotesize $\vec e_3$}}		
		\put(36,61){{\normalsize $s$}}
		\put(104,30){{\normalsize $g$}}
		\put(193,52){{\normalsize $g \cdot s$}}
		\put(20,130){{\normalsize $\Re^2$}}
		\put(167,153){{\normalsize $\Re^3$}}
		\end{picture}
		\vspace*{0.1cm}
		\caption{Illustration of the group action corresponding to the rotation about the vertical axis $\vec e_3$.
			A group element $g \in \SO_{\vec e_3}$ (green), which corresponds to $\ensuremath{\mathsf{SO(2)}}$ embedded in $\SO$ as a subgroup by fixing the axis of rotation as $\vec e_3$, acts on a quadrotor state $s$ (purple) by rotating it to a new state $g \cdot s$ (blue). 
			The orbit of $s$ under this action, denoted by $G \cdot s$, is the set of all points reachable by rotations about $\vec e_3$, and its projection on the position space corresponds to a circle (red).
            In the proposed equivariant RL, the control policy learned at a single point on the orbit is automatically generalized to any other points on the orbit. 
		} 
		\label{fig:rot_sym_concept} 
		\vspace*{-0.3cm}
	\end{figure}	
	
	In short, the main contributions of this work are as follows:
	\begin{enumerate}
		\item We develop data-efficient monolithic and modular reinforcement learning frameworks for quadrotor low-level control by exploiting group-equivariant neural networks that respect the rotational and reflectional symmetries of the quadrotor dynamics, where the generalization capability is naturally encoded. 
		This approach reduces the dimensionality of sample trajectories, thereby enhancing convergence rates.
		\item We compare monolithic and modular architectures, highlighting that the modular design, which decouples translational and yaw dynamics, significantly enhances both tracking accuracy and learning efficiency.
		\item We demonstrate the superiority of the proposed equivariant frameworks over traditional non-equivariant counterparts through numerical simulations and real-world flight experiments.
	\end{enumerate}
	
	The remainder of the paper is structured as follows.
    \Cref{sec:BG} introduces the background of RL and equivariance learning, and \Cref{sec:SymDyn} explores symmetries in the quadrotor dynamics.
    Next, in \Cref{sec:EquivRL}, the proposed equivariance monolithic and modular RL frameworks are introduced. 
	Finally, experimental results and conclusions are presented in \Cref{sec:Exp,sec:Conc}, respectively.

	% ========================================================== %
	%                    Background                     			                 											 %
	% ========================================================== %
	\section{Backgrounds}\label{sec:BG}	
	% -------------- Subsection -------------- %
	\subsection{Reinforcement Learning Problems}
	In reinforcement learning, a decision-making agent aims to learn a policy through interactions with an environment, guided by reward signals.
	Markov Decision Process (MDP) serves as the mathematical foundation for modeling RL problems.  
    It is defined by the tuple $(\mathcal{S}, \mathcal{A}, \mathcal{R}, \mathcal{T}, \gamma)$, where $\mathcal{S}$ and $\mathcal{A}$ denote state and action spaces, respectively.
	Next, the reward function $\mathcal{R}: \mathcal{S} \times \mathcal{A} \rightarrow \mathbb{R}$ maps state-action pairs to scalar reward values, defined by $r(t) = \mathcal{R}(s(t), a(t))$ at time $t$.
	The transition probability function, $\mathcal{T}: \mathcal{S} \times \mathcal{A} \rightarrow \mathcal{P}(\mathcal{S})$, defines the distribution $p(s(t+1) | s(t), a(t))$ of the new state $s(t+1)$ at $t+1$, given a current $s(t)$ under an action $a(t)$.
	
	The goal of RL agents is to find an optimal policy $\pi^* (a(t) | s(t))$ that maximizes the expected discounted return,
	\begin{align}
		J(\pi) = \E_{\tau \sim \pi} [\sum_{k=t}^\infty \gamma^{k-t} r(s(k), a(k))], \label{eqn:J}	
	\end{align}
	where $\gamma \in (0, 1)$ is a discount factor for the temporal decay of future rewards, and $\tau = \{s_0, a_0, s_1, a_1, \ldots\}$ denotes a trajectory of state-action pairs over the interval $[t,\infty)$ starting at $s(t) = s_0\in\mathcal{S}$. 
	Then, the optimal policy $\pi^*$ can be expressed as 
	\begin{align}
		{\pi}^* = \underset{\pi}{\arg\max} ~ J(\pi).
	\end{align}
	
	For a given policy $\pi$, the optimality is often formulated through a state-value function $V_\pi : \mathcal{S} \rightarrow \Re$, which represents the expected return when starting in $s_0$ and following policy $\pi$, defined by
	\begin{align}
		V_\pi(s(t)) = \E_{\tau \sim \pi} [\sum_{k=t}^\infty \gamma^{k-t} r(s(k), a(k)) | s_0]. \label{eqn:V_discrete}	
	\end{align}
	Alternatively, it is useful to introduce an action-value function, also known as a Q-value, $Q_\pi: \mathcal{S} \times \mathcal{A} \rightarrow \Re$.
	This represents the expected return when starting at $s_0$, taking an action $a(t) = a_0\in\mathcal{A}$, and subsequently following policy $\pi$, defined as
	\begin{align}
		Q_\pi(s(t),a(t)) = \E_{\tau \sim \pi}  [\sum_{k=t}^\infty \gamma^{k-t} r(s(k), a(k)) | s_0, a_0]. \label{eqn:Q_discrete}	
	\end{align}

	Further, in continuous-time, deterministic MDPs, the above discrete-time value function \eqref{eqn:V_discrete}  is reformulated as
	\begin{gather}
	V_\pi(s(t)) = \int_t^\infty \gamma^{\tau-t} r(s(\tau), a(\tau)) \, d\tau, \label{eqn:V_continuous} 
	%Q_\pi(s(t), a(t)) = \int_t^\infty \gamma^{\tau-t} r(s(\tau), a(\tau)) \, d\tau. \label{eqn:Q_continuous}
	\end{gather}
	for $\tau \in [t, \infty)$. 
    Typically, the value functions $V_\pi(s)$ and $Q_\pi(s, a)$ are represented by neural networks and recursively updated by the Bellman equation.
	
	\begin{figure*}[t]
		\centering
		\hspace*{-0.55cm}
		\begin{picture}(488,125)
			\put(0,0){\includegraphics[width=0.99\textwidth]{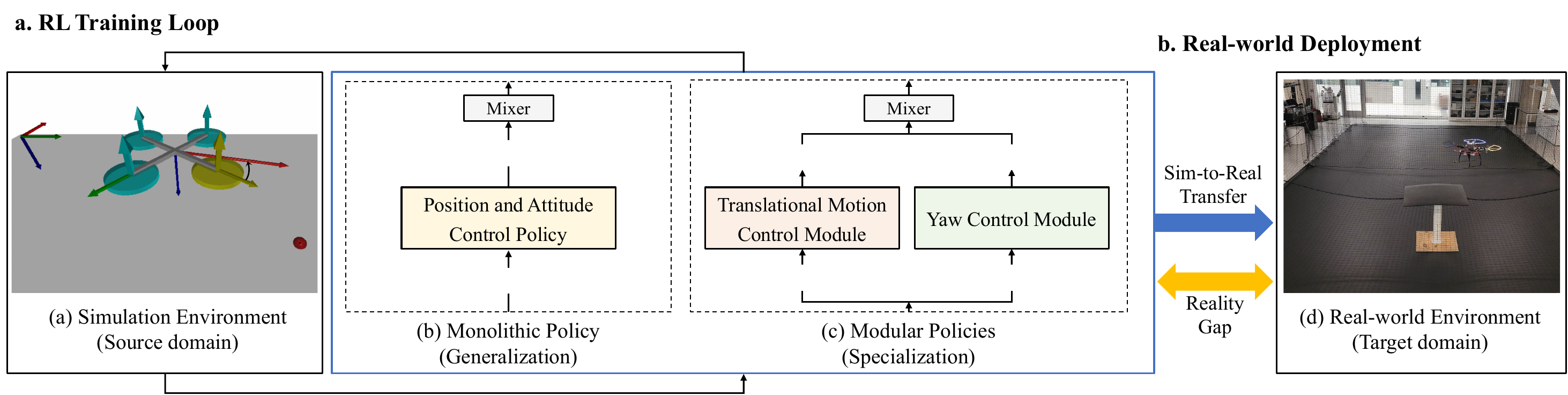}}
			
			\put(13,90){{\scriptsize $\vec e_1$}}
			\put(21,82){{\scriptsize $\vec e_2$}}
			\put(11,67.5){{\scriptsize $\vec e_3$}}		
			\put(82,70){{\scriptsize $e_{b_1}$}}
			\put(88,78){{\scriptsize $b_{1_c}$}}
			\put(80,59){{\scriptsize $\vec b_1$}}
			\put(27,56){{\scriptsize $\vec b_2$}}
			\put(56,57){{\scriptsize $\vec b_3$}}
			\put(69,86){{\scriptsize $T_1$}}
			\put(36.5,85.5){{\scriptsize $T_2$}}
			\put(42,95){{\scriptsize $T_3$}}
			\put(64,95){{\scriptsize $T_4$}}
			\put(93.5,42){{\footnotesize $x_d$}}
			
			\put(106,114){{\footnotesize $(T_1, T_2, T_3, T_4) \in \Re^4$}}
			\put(90,-8){{\footnotesize $s=(x, v, R, \Omega) \in \Re^{9} \times \SO$}}
			
			\put(146,76){{\footnotesize $a_\mono \in \Re^4$}}
			\put(128.5,32){{\footnotesize $o_\mono \in \Re^{14} \times \SO $}}
			
			\put(242.5,75){{\footnotesize $a_{\mod^1} \in \Re^4$}}
			\put(231,35.5){{\footnotesize $o_{\mod^1} \in \Re^{12}\times \Sph^2$}}
			\put(309,75){{\footnotesize $a_{\mod^2} \in \Re$}}
			\put(311,35.5){{\footnotesize $o_{\mod^2} \in \Re^3$}}
		\end{picture}
		\vspace*{0.35cm}
		\caption{
			A schematic overview of the system.
			\textbf{a.}, During training, we train RL policies for quadrotor low-level control tasks in simulation. 
			(a) A custom simulator serves as a training environment, providing full access to the quadrotor's dynamics and state.
			(b) A monolithic end-to-end policy directly outputs total thrust $f$ and moments $M$.
			(c) Two specialized modules independently control translational and yaw motions, each selecting the optimal action based on its local observations.
			\textbf{b.}, When transferring trained policies from simulation to the physical world, the sim-to-real gap arises from mismatches between simulation and reality.
			To bridge this gap, domain randomization is applied during the training phase.
			(d) An indoor flight test facility at the Flight Dynamics and Control Lab, GWU for real-world deployment.
			A supplementary video of the RL training and real-world experiments is available at \url{https://youtu.be/TGBQTuKpbAw}.
		}
		\label{fig:framework}
		\vspace*{-0.2cm}
	\end{figure*}

	% -------------- Subsection -------------- %
    \subsection{Equivariant Neural Networks} 	
    Equivariant learning is a machine learning paradigm where the model’s predictions change in a structured, predictable way in response to transformations of the input. 
    In simpler terms, if the input to the model is transformed by a group action, e.g., rotation, translation, and reflection, the model’s output should change correspondingly in a consistent manner.
	
	More precisely, a \textit{group} $G$ is a set equipped with a binary operation satisfying closure, associativity, identity, and inverse properties. 
	A (left) action of a group $G$ on $X$ is a mapping $\Phi: G \times X \rightarrow X$ satisfying two conditions: 
    for any $x\in X$ and $g,h\in G$, $\Phi(e,x)=x$ and $\Phi(g,\Phi(h,x))=\Phi(gh,x)$, where $e\in G$ is the identity element. 
    This notation for the group action is often shortened into $\Phi(g,x)=g \cdot x = gx$. 
    Next, suppose $G$ acts on both $X$ and $Y$. 
    A map $f:X\rightarrow Y$ is \textit{equivariant} if 
    \begin{equation}
        f(g\cdot x) = g \cdot f(x), \label{eqn:equiv}
    \end{equation}
    for any $g\in G$ and $x\in X$. 
    As discussed above, this implies that the action on the input of $f$ is equivalent to the action on the output of $f$. 
    In other words, a certain transformation in the input results in the corresponding transformation in the output, implying a structured relation between the input space and the output space, which can be utilized for the generalization capability of any data-driven technique. 

	When the space $X$ on which the group acts is a vector space $V$ and the action is linear, the group action can be described by a linear transformation, i.e., matrix multiplication. 
	Specifically, the \textit{representation} of $G$ on a vector space $V$ is a map $\rho$ from $G$ to the general linear group $GL(V)$ on $V$ such that $\rho(g_1,g_2) = \rho(g_1)\rho(g_2)$ for any $g_1, g_2\in G$. 
	Since $\rho(g)\in GL(V)$, the representation is an invertible linear transformation from $V$ to itself, and as such, the group action can be written as the matrix multiplication $\Phi(g, v) = \rho(g) v$ for $v\in V$.  
		
	Neural network models satisfying the equivariant property of \eqref{eqn:equiv} are referred to as equivariant neural networks. 
	It is well known that the classical CNNs satisfy the translational equivariance~\cite{lecun1989backpropagation}, implying that they can detect the same feature in the image when the location of the feature is shifted. 
	Further, equivariant neural networks have been extended to other symmetries, such as rotations and reflections. 
	For instance, as demonstrated by~\cite{wang2022mathrm}, incorporating rotational symmetry into RL models substantially improved sample efficiency in robotic manipulation tasks.
	Recently, equivariant multilayer perceptrons (EMLPs) for multiple matrix groups were presented in~\cite{finzi2021practical}, which is utilized later in this paper when implementing equivariant RL models.

	% ========================================================== %
	%                    Problem Formulation                     														    %
	% ========================================================== %
	\section{Symmetries in Quadrotor Dynamics}\label{sec:SymDyn}
	% This section details the MDP formulation of quadrotor control for monolithic and modular frameworks.
	In this section, we explore the symmetry properties of quadrotor dynamics and discuss how they yield equivariance properties that can be utilized in RL. 
    We first show the rotational symmetry of the complete, coupled dynamics of the quadrotor.
    Second, we introduce another formulation of the quadrotor dynamics that are decomposed into the translational part and the yawing part, and show the rotational symmetry and the reflection symmetry, respectively for each part. 
    These two types of quadrotor dynamic models are referred to as the monolithic model and the modular model, respectively, following the name of RL frameworks that will be developed in \Cref{sec:EquivRL}.
	
	% -------------- Subsection -------------- %
	\subsection{Monolithic Model} \label{sec:dyn_mono}	

	Consider the inertial frame $\{\vec e_1,\vec e_2,\vec e_3\}$, where $\vec e_3$ is aligned with the gravity pointing downward, and the body-fixed frame $\{\vec b_{1},\vec b_{2},\vec b_{3}\}$ located at the mass center of the quadrotor, where $\vec b_{3}$ points in the opposite direction to thrust.
    The state transition dynamics  $\mathcal{T}$ is constructed by discretizing the following equations of motion for the quadrotor~\cite{lee2010geometric}:
	\begin{gather}
		\dot x  = v,\label{eqn:x_dot}\\
        m \dot v = m\grav e_3 - f R e_3,\label{eqn:v_dot}\\
		\dot R = R\hat\Omega,\label{eqn:R_dot}\\
		J\dot \Omega + \Omega\times J\Omega = M,\label{eqn:W_dot}
	\end{gather}
	where the \textit{hat map}, denoted by $\hat\cdot:\Re^3\rightarrow\so=\{S\in\Re^{3\times 3}\,|\, S^T = -S\}$,  transforms a vector in $\Re^3$ to a $3\times 3$ skew-symmetric matrix, such that $\hat x y = x\times y$ for any $x,y\in\Re^3$.
    We denote the mass and inertia matrix of the quadrotor by $m\in\Re$ and $J\in\Re^{3\times 3}$, respectively, and the gravitational acceleration by $\grav\in\Re$. 
    Also, $e_3=[0,0,1]^T\in\Re^3$. 

	Here, the position and velocity of the quadrotor in the inertial frame are denoted by $x\in\Re^3$ and $v\in\Re^3$, respectively.
	Also, the attitude is defined by the rotation matrix $R\in\SO=\{R\in\Re^{3\times 3}\,|\, R^T R=I_{3\times 3},\; \mathrm{det}[R]=1\}$, which is the transformation of the representation of a vector from the body-fixed frame to the inertial frame, and the angular velocity resolved in the body-fixed frame is denoted by $\Omega = [\Omega_1, \Omega_2, \Omega_3]^T \in\Re^3$.
	Thus, the state variable for the monolithic model is defined as $s_\mono = (x, v, R, \Omega) \in \mathcal{S}_\mono = \Re^{9}\times \SO$.

	Next, the action variable is defined as a four-dimensional vector composed of the total thrust $f \in\Re$ and the control moment $M = [M_1, M_2, M_3]^T \in\Re^3$ resolved in the body-fixed frame, and it is denoted by $a_\mono = (f, M)\in \mathcal{A}_\mono = \Re^4$.
	Here, the thrust of each motor $T_i$ is determined by the following mixer matrix,
	\begin{gather} \label{eqn:mixer}
		\begin{bmatrix} 
			T_1 \\ T_2 \\ T_3 \\ T_4
		\end{bmatrix}
		= \frac{1}{4}
		\begin{bmatrix}
			1 & 0      & 2/d   & -1/c_{\tau f} \\
			1 & -2/d & 0      & 1/c_{\tau f} \\
			1 & 0      & -2/d & -1/c_{\tau f} \\
			1 & 2/d   & 0      & 1/c_{\tau f} 
		\end{bmatrix}
		\begin{bmatrix}
			f \\ M_1 \\ M_2 \\ M_3 
		\end{bmatrix}.
	\end{gather}
	where $d\in\Re$ represents the distance between $\vec b_{3}$ and the center of any rotor, and $c_{\tau f} \in\Re$ is a constant that relates the thrust and the resulting reactive torque.

    \paragraph*{Rotational Equivariance}
    As illustrated in \Cref{fig:rot_sym_concept}, consider the group $G = \SO_{\vec e_3}$, which corresponds to the group of planar rotations $\ensuremath{\mathsf{SO(2)}}$ embedded in $\SO$ as a subgroup by fixing the axis of rotation to be the third inertial axis $\vec e_3$.
    The group action of $G$ on $\Re^3$ or $\SO$ can be expressed by matrix multiplication, with the following representation. 
    Let $\rho_\theta:G\rightarrow\Re^{3\times 3}$ be a group representation of $G$ parameterized by the rotation angle $\theta\in(-\pi, \pi]$.
    Its action on $v\in\Re^3$ can be written as
    \begin{align}
        \rho_\theta(g)v =  
        \exp(\theta\hat e_3) v = 
        \begin{bmatrix}
            \cos \theta & -\sin \theta & 0 \\ 
            \sin \theta & \cos \theta & 0 \\
            0 & 0 & 1
        \end{bmatrix}
        v.\label{eqn:rho_theta_g}
    \end{align}

	Next, we demonstrate that the quadrotor’s full dynamics \eqref{eqn:x_dot}--\eqref{eqn:W_dot} is symmetric with respect to $G$. 
    The group action of $G$ on $\mathcal{S}_\mono$ and $\mathcal{A}_\mono$ is defined by
	\begin{align}
		g s_\mono & = (\rho_\theta(g) x, \rho_\theta(g) v, \rho_\theta(g) R, \Omega), \label{eqn:gs_mono} \\
		g a_\mono & = (f, M). \label{eqn:ga_mono}
	\end{align}
	This corresponds to rotating the complete system about the vertical axis $\vec e_3$ by the angle $\theta$. 
	In other words, $g$ operates on $(s_\mono, a_\mono)$ by rotating $x$, $v$, and $R$, but leaves $\Omega$, $f$, and $M$ unchanged.
	Here, $\Omega$ and $M$ remain unaffected by the rotation as they are resolved with respect to the body-fixed frame. 
	However, when perceived from the inertial frame, the angular velocity is rotated from $\omega \triangleq R \Omega\in\Re^3$ into $\rho_\theta(g)R\Omega = \rho_\theta(g) \omega$ by the group action.
    Similarly, $M$ is actually rotated in the inertial frame. 
	Also, $f$ is not changed by the rotation, as it is the total thrust magnitude.
    Note that we adopt an abuse of notation, where the group action on the state $s_\mono$ and the group action on the control action $a_\mono$ are denoted by the same symbol $g$, i.e., it is understood that \eqref{eqn:gs_mono} or \eqref{eqn:ga_mono} is chosen depending on the proper context. 
	
    Next, we show that the monolithic dynamic model is equivariant. 
    \begin{prop}\label{prop:sym_mono}
        Let the equations of motion \eqref{eqn:x_dot}--\eqref{eqn:W_dot} be consolidated into
        \begin{align}
            \dot s_\mono = F(s_\mono, a_\mono),\label{eqn:s_dot}
        \end{align}
        where $F:\mathcal{S}_\mono\times \mathcal{A}_\mono\rightarrow T\mathcal{S}_\mono$, and $T\mathcal{S}_\mono$ denotes the tangent bundle of the state space.  
        Then $F$ is equivariant with respect to the action defined in \eqref{eqn:gs_mono} and \eqref{eqn:ga_mono}, i.e., 
        \begin{align}
            F \circ g = g \circ F,
        \end{align}
        where the action on $T\mathcal{S}_\mono$ is defined as \eqref{eqn:gs_mono}.
    \end{prop}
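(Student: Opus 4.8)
The plan is to establish the equivariance $F(g\cdot s_\mono, g\cdot a_\mono) = g\cdot F(s_\mono,a_\mono)$ by a direct, slot-by-slot verification, which is natural because $F$ splits cleanly along the four factors of $\mathcal{S}_\mono = \Re^{9}\times\SO$. First I would make $F$ explicit by solving \eqref{eqn:W_dot} for the angular acceleration, writing
\begin{align}
F(s_\mono,a_\mono) = \parenth{v,\; \grav e_3 - \tfrac{f}{m}Re_3,\; R\hat\Omega,\; J^{-1}(M-\Omega\times J\Omega)}. \nonumber
\end{align}
Abbreviating $Q\triangleq\rho_\theta(g)=\exp(\theta\hat e_3)$, I would record the single property that drives the entire argument: since $Q$ is a rotation \emph{about} $\vec e_3$, it fixes that axis, i.e.\ $Qe_3 = e_3$.

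Next I would evaluate both sides. On the left, substituting $g\cdot s_\mono=(Qx,Qv,QR,\Omega)$ and $g\cdot a_\mono=(f,M)$ into $F$ produces the four components $Qv$, $\grav e_3-\tfrac{f}{m}QRe_3$, $QR\hat\Omega$, and $J^{-1}(M-\Omega\times J\Omega)$. On the right, applying the $T\mathcal{S}_\mono$-action (which, by \eqref{eqn:gs_mono}, multiplies the first three slots by $Q$ and leaves the $\Omega$-slot untouched) to $F(s_\mono,a_\mono)$ gives $Qv$, $\grav Qe_3-\tfrac{f}{m}QRe_3$, $QR\hat\Omega$, and $J^{-1}(M-\Omega\times J\Omega)$. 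Matching slot by slot, the $\dot x$, $\dot R$, and $\dot\Omega$ components coincide by nothing more than associativity of matrix multiplication together with the fact that $\Omega$, $J$, and $M$ are all left invariant by the action.

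The crux, and in fact the only non-trivial equality, is the translational-velocity component, where I must verify $\grav e_3 = \grav Qe_3$; this reduces exactly to $Qe_3=e_3$, which holds by the stabilizer property recorded above. This is the step I expect to carry the weight of the proof: it is precisely where the gravity term $\grav e_3$ would otherwise obstruct equivariance, and it explains why the admissible symmetry group is the stabilizer $\SO_{\vec e_3}$ of the gravity direction rather than the full $\SO$. Everything else is bookkeeping; in particular, no hat-map conjugation identity such as $\widehat{Q\Omega}=Q\hat\Omega Q^{T}$ is needed here, because $\Omega$ is resolved in the body frame and is therefore unaffected by $g$. Assembling the four matched components then yields $F\circ g = g\circ F$, completing the verification.
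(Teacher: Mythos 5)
Your proof is correct and follows essentially the same route as the paper's own: both write out $F$ explicitly in its four components, substitute the group action, and reduce the verification to the single identity $\rho_\theta(g)e_3 = e_3$, i.e.\ that rotations about $\vec e_3$ fix the gravity direction. Your additional remarks---that this stabilizer property is why the symmetry group is $\SO_{\vec e_3}$ rather than all of $\SO$, and that no hat-map conjugation is needed since $\Omega$ is body-fixed---are accurate and make the argument slightly more self-contained than the paper's terse version.
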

	
	\begin{proof}
		From \eqref{eqn:x_dot}--\eqref{eqn:W_dot}, we have
		\begin{align*}
		F(s_\mono, a_\mono) = 
        \begin{bmatrix} v\\
            \grav e_3 - \frac{f}{m} R e_3 \\
				R \hat{\Omega}\\
                J^{-1}(M - \Omega \times J \Omega)
            \end{bmatrix}.
		\end{align*}
        Then, $F\circ g$ is 
		\begin{align*}
            F (gs_\mono, g a_\mono) = 
            \begin{bmatrix}
				\rho_\theta(g) v\\
                \grav e_3 - \frac{f}{m} \rho_\theta(g) R e_3\\
				\rho_\theta(g) R \hat{\Omega}\\ 
                J^{-1}(M - \Omega \times J \Omega)
            \end{bmatrix}. 
        \end{align*}
        This is identical to $g\circ F$ as $\rho_\theta(g) e_3 = e_3$ for any $\theta$. 
	\end{proof}
	
	This implies that if the state-action trajectory $(s(t), a(t))$ for $t \in [0,T]$ is a solution to the quadrotor dynamics given by \eqref{eqn:x_dot}--\eqref{eqn:W_dot}, then its rotated trajectory $(gs(t), ga(t))$ is another trajectory of the same dynamics for any $\theta$.

	This rotational symmetry further allows us to define an equivalence relation on the set of state-action trajectories over the interval $[0,T]$. 
	More specifically, we define a relation:
	\begin{align*}
		(s(t), a(t)) \sim (\tilde{s}(t), \tilde{a}(t)) 
	\end{align*}
	if there exists $g\in G$ such that $(\tilde{s}(t), \tilde{a}(t)) = (gs(t), ga(t))$ for all $t \in [0,T]$, and this relation satisfies the properties of the equivalence relation, e.g.,
	\begin{itemize}
		\item Reflexivity: $(s(t), a(t)) \sim (s(t), a(t))$ by taking $\theta = 0$.
		\item Symmetry: If $(s(t), a(t)) \sim (\tilde{s}(t), \tilde{a}(t))$ with $\theta$, then $(\tilde{s}(t),$ $\tilde{a}(t)) \sim (s(t), a(t))$ by using the inverse rotation $-\theta$.
        \item Transitivity: If $(s_1(t), a_1(t)) \sim (s_2(t), a_2(t))$ with $\theta_{12}$ and $(s_2(t),$ $a_2(t)) \sim (s_3(t), a_3(t))$ with $\theta_{23}$, then $(s_1(t), a_1(t)) \sim (s_3(t), a_3(t))$ by composing the two rotations to form $\theta_{13} = \theta_{12} + \theta_{23}$.
	\end{itemize}
	Given this equivalence relation, we define the equivalence class of the trajectory for any pair $(s, a) \in \mathcal{S}_\mono \times \mathcal{A}_\mono$ as
	\begin{align*}
		[s, a] = \{ (\tilde{s}, \tilde{a}) \in \mathcal{S}_\mono \times \mathcal{A}_\mono \mid (s, a) \sim (\tilde{s}, \tilde{a}) \}.
	\end{align*}
	Thus, the behavior of the quadrotor can be fully captured on the quotient space $\mathcal{S}_\mono \times \mathcal{A}_\mono /\sim$.
    In other words, the dimension of the domain in which the action-value function and the policy should be trained is reduced by the dimension of $G$, which is one in the presented quadrotor dynamics.
    Or, this can also be interpreted that the learning at any specific state-action pair $(s, a)$ is automatically generalized into any other points on its equivalence class $[s, a]$. 
    These improve the data-efficiency and the generalization capability of RL as discussed in the preceding sections. 
	
	% -------------- Subsection -------------- %
    \subsection{Modular Model}\label{sec:dyn_mod} %with Decoupled Yaw Control}

	While reinforcement learning approaches based on the above monolithic model are widely used in quadrotor controls, they often suffer from performance degradation in flights involving agile yaw maneuvers.
	This is primarily because a single policy should manage the complete dynamics, including the yawing motion that has unique characteristics. 
	To address this, a modular reinforcement learning approach has been proposed in~\cite{yu2024multi}, where the translational dynamics are decoupled from the yaw dynamics.
	The fundamental idea behind decomposing the quadrotor dynamics involves splitting the three-dimensional orthogonal group $\SO$ of the attitude into the two-sphere $\Sph^2$, and the one-sphere $\Sph^1$, where the former represents the direction of the thrust $b_3$ corresponding to pitching and rolling, and the latter corresponds to the rotation about $b_3$ representing yawing. 
    Here, we investigate the equivariance properties of each component of the modular model to be utilized in the subsequent development of the equivariant, modular RL. 
	
    \subsubsection{Translational Dynamics Module}
	This module covers the translational motion of the quadrotor, including the roll and pitch dynamics that govern the direction of the total thrust. 
	In the equations of motion \eqref{eqn:x_dot} and \eqref{eqn:v_dot} presented in \Cref{sec:dyn_mono}, the translational dynamics are coupled with the attitude dynamics solely through the single term $b_3 \triangleq Re_3\in\Sph^2$. 
    The rotational dynamics, given by \eqref{eqn:R_dot} and \eqref{eqn:W_dot}, are not affected by the translational dynamics. 
	Assuming the quadrotor is inertially symmetric about the third body-fixed axis, i.e., $J_1 = J_2$, the dynamics of $b_3$ can be separated from the full attitude dynamics \eqref{eqn:R_dot} and \eqref{eqn:W_dot}.
	This yields the following translational dynamics, decoupled from the yaw~\cite{gamagedara2019geometric,gamagedara2022geometric}: 
	\begin{gather}
		\dot x  = v,\label{eqn:decoup_x_dot}\\
		m \dot v = mge_3 - f b_3,\label{eqn:decoup_v_dot}\\
		\dot b_3 = \omega_{12} \times b_3,\label{eqn:decoup_b3_dot}\\
		J_1 \dot \omega_{12} = \tau,\label{eqn:decoup_w12_dot}
	\end{gather}
	where $\omega_{12} = \Omega_1 b_1 + \Omega_2 b_2 \in \Re^3$ denotes the angular velocity of $b_3$ resolved in the inertial frame, satisfying $\omega_{12}  \perp b_3$ and $\dot\omega_{12}  \perp b_3$ always.
	Also, $\tau = \tau_1 b_1 + \tau_2 b_2 \in \Re^3$ represents a fictitious control moment defined by $\tau_1,\tau_2\in\Re$, which are related to the first two components of the actual control moment $(M_1, M_2)$ through 
	\begin{align*}
		M_1 = \tau_1 + J_3\Omega_2\Omega_3, \quad M_2 = \tau_2 - J_3\Omega_3\Omega_1.
	\end{align*}
	While $\tau$ consists of three elements, it is constrained to two degrees of freedom due to the constraint, $\tau \perp b_3$. 

	In summary, the roll and pitch dynamics for $b_3$ evolving on $\Sph^2$ are controlled by the control moment $\tau$ defined in terms of $(M_1, M_2)$, and they determine the direction of the total thrust, namely $-b_3$. 
	Given the magnitude of the total thrust $f\in\Re$, the complete thrust vector controlling the translational dynamics is given by $-fb_3\in\Re^3$ resolved in the inertial frame. 
	Thus, the state and action variables for this first, translational dynamics module are defined as $s_{\mod^1}  = (x, v, b_3, \omega_{12}) \in \mathcal{S}_{\mod^1}  = \Re^{9}\times \Sph^2$ and $a_{\mod^1}  = (f, \tau) \in \mathcal{A}_{\mod^1}  =  \Re^4$, respectively.
	
    \paragraph*{Rotational Equivariance}
	Next, we show that the translational dynamics is equivariant with respect to the one-dimensional rotations similar to those introduced in the monolithic model. 
    More explicitly, the group action $G$ on $\mathcal{S}_{\mod^1}$ and $\mathcal{A}_{\mod^1}$ is defined by
	\begin{align}
		gs_{\mod^1} & = (\rho_\theta(g) x, \rho_\theta(g) v, \rho_\theta(g) b_3, \rho_\theta(g) \omega_{12}), \label{eqn:gs_module1} \\
		ga_{\mod^1}  & = (f, \rho_\theta(g) \tau), \label{eqn:ga_module1}
	\end{align}
    where $\rho_\theta(g)$ is identical to \eqref{eqn:rho_theta_g}.
	
	\begin{prop}\label{prop:sym_module1}
		The translational dynamics, given by \eqref{eqn:decoup_x_dot}--\eqref{eqn:decoup_w12_dot}, is equivariant.
	\end{prop}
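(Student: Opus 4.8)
The plan is to follow the template of the proof of \Cref{prop:sym_mono} line by line. First I would consolidate the translational equations \eqref{eqn:decoup_x_dot}--\eqref{eqn:decoup_w12_dot} into a single vector field $\dot s_{\mod^1} = F_1(s_{\mod^1}, a_{\mod^1})$, where
\begin{align*}
F_1(s_{\mod^1}, a_{\mod^1}) =
\begin{bmatrix}
v \\
\grav e_3 - \frac{f}{m} b_3 \\
\omega_{12} \times b_3 \\
\frac{1}{J_1}\tau
\end{bmatrix}.
\end{align*}
I would then substitute the transformed pair $(g s_{\mod^1}, g a_{\mod^1})$ from \eqref{eqn:gs_module1}--\eqref{eqn:ga_module1} into $F_1$ and compare the result, row by row, against $g\circ F_1$, with the induced action on $T\mathcal{S}_{\mod^1}$ taken to be \eqref{eqn:gs_module1}.

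Three of the four rows are essentially immediate. The $\dot x$ and $\dot\omega_{12}$ rows follow from linearity of $\rho_\theta(g)$, since $\rho_\theta(g) v$ and $\frac{1}{J_1}\rho_\theta(g)\tau$ appear identically on both sides. The $\dot v$ row reduces to $\grav e_3 - \frac{f}{m}\rho_\theta(g) b_3 = \rho_\theta(g)(\grav e_3 - \frac{f}{m} b_3)$, which holds precisely because $\rho_\theta(g) e_3 = e_3$ --- the same invariance of the gravity direction that was the crux of \Cref{prop:sym_mono}.

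The one genuinely new step, and the part I expect to be the main obstacle, is the $\dot b_3$ row. Unlike the monolithic case, where the body-frame angular velocity $\Omega$ was left fixed by the action, here both factors of the cross product are rotated, so I must establish
\begin{align*}
(\rho_\theta(g)\omega_{12}) \times (\rho_\theta(g) b_3) = \rho_\theta(g)(\omega_{12}\times b_3).
\end{align*}
This is the rotation-equivariance of the cross product: for any $R\in\SO$ one has $(Ru)\times(Rv) = (\det R)\, R(u\times v) = R(u\times v)$ since $\det R = 1$, or equivalently $\widehat{Ru} = R\hat u R^T$. Because $\rho_\theta(g)$ is a rotation about $\vec e_3$ by \eqref{eqn:rho_theta_g}, it lies in \SO and the identity applies, making this row match $g\circ F_1$ as well.

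Finally, I would confirm that the action is well-defined on the product manifold $\Re^9\times\Sph^2$ and preserves the kinematic constraints. Since $\rho_\theta(g)$ is an isometry, $\norm{\rho_\theta(g) b_3} = \norm{b_3} = 1$, so the rotated state remains on $\Sph^2$, and $\ip{\omega_{12}, b_3}$ and $\ip{\tau, b_3}$ are unchanged, so $\omega_{12}\perp b_3$ and $\tau\perp b_3$ are maintained. Together with the row-by-row comparison, this yields $F_1\circ g = g\circ F_1$, establishing the claimed equivariance.
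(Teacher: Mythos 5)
Your proposal is correct and follows essentially the same route as the paper's proof: consolidating \eqref{eqn:decoup_x_dot}--\eqref{eqn:decoup_w12_dot} into a vector field, substituting the action \eqref{eqn:gs_module1}--\eqref{eqn:ga_module1}, and closing the argument with the rotation-equivariance of the cross product ($Rx\times Ry = R(x\times y)$ for $R\in\SO$) together with $\rho_\theta(g)e_3 = e_3$. Your extra check that the action preserves $\Sph^2$ and the constraints $\omega_{12}\perp b_3$, $\tau\perp b_3$ is a sensible addition the paper omits, but it does not change the substance of the argument.
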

	
	\begin{proof}
	The equations of motion given by \eqref{eqn:decoup_x_dot}--\eqref{eqn:decoup_w12_dot} can be rearranged into $\dot s_{\mod^1} = F(s_{\mod^1}, a_{\mod^1})$ for an appropriate $F$.
		Under the group action $g$, $F\circ g$ is 
		\begin{align*}
            F( g s_{\mod^1}, g a_{\mod^1} )  = 
			\begin{bmatrix} 
				\rho_\theta(g) v \\
				m\grav e_3 - f \rho_\theta(g) R e_3 \\
				\rho_\theta(g) \omega_{12} \times \rho_\theta(g) b_3 \\
				\rho_\theta(g) \tau
			\end{bmatrix}.
		\end{align*}
		Since $R x \times R y = R (x \times y)$ for any $x, y \in \Re^3$ and $R \in \SO$, and $\rho_\theta(g)\in\SO$, we can show that the above reduces to $F \circ g = (I_{4 \times 4} \otimes \rho_\theta(g))F = g \circ F$, where $\otimes $ denotes the Kronecker product. 
	\end{proof}

	\subsubsection{Yaw Dynamics Module}
	
	From \eqref{eqn:R_dot} and \eqref{eqn:W_dot}, the remaining one-dimensional yaw dynamics, decoupled from the roll/pitch dynamics, can be written as
	\begin{align}
		\dot b_1 &= \omega_{23}\times b_1 = - \Omega_2 b_3 + \Omega_3 b_2,\label{eqn:decoup_b1_dot}\\
		J_3 \dot \Omega_3 &= M_3.\label{eqn:decoup_W3_dot}
	\end{align}
	where $\omega_{23} = \Omega_2 b_2 + \Omega_3 b_3 \in \Re^3$ represents the angular velocity for yawing.

	Let $b_{1_d}(t) \in \Sph^2$ be a desired direction of the first body-fixed axis, given by a smooth path on $\Sph^2$. 
    Here, we reformulate the above yaw dynamics into the error dynamics representing the difference between $b_1$ and $b_{1_d}$. 
    In general, we cannot guarantee that $b_1$ asymptotically converges to $b_{1_d}$ as $b_1$ is always normal to $b_3$, but the desired trajectory $b_{1_d}$ does not necessarily satisfy the same constraint. 
	To resolve this, we project $b_{1_d}$ onto the plane perpendicular to $b_3$ to define a projected yaw command $b_{1_c} = (I_{3\times 3}-b_3 b_3^T ) b_{1_d} \in \Sph^2$, ensuring that $b_{1_c} \perp b_3$ always.
    Then, the yawing error $e_{b_1} \in (-\pi,\pi]$ is formulated to be the angle between $b_1$ and $b_{1_c}$ within the plane normal to $b_3$, i.e.,
	\begin{align}
		e_{b_1} = \mathrm{atan2} (-b_{1_c} \cdot b_2,~ b_{1_c} \cdot b_1). \label{eqn:eb1}
	\end{align}
	Note that, despite $b_1$ being a two-dimensional unit vector in $\Sph^2$, the yaw error is reduced to be one-dimensional on $\Sph^1$ due to the constraint that both $b_1$ and $b_{1_c}$ are orthogonal to $b_3$.

    One can show that the time derivative of $e_{b_1}$ is given by
	\begin{align}
		\dot e_{b_1} &=\Omega_3 - \omega_{c_3}, \label{eqn:eb1_dot}
    \end{align}
	where $\omega_{c_3} = b_3 \cdot \omega_c \in \Re$ with $\omega_c\in\Re^3$ representing the angular velocity of $b_{1_c}$ resolved in the inertial frame, i.e., $\dot b_{1_c} = \omega_c\times b_{1_c}$.

    By taking the time derivative of this and substituting \eqref{eqn:decoup_W3_dot}, the yawing error dynamics is simplified into
	\begin{align}
		\ddot{e}_{b_1} &= \frac{1}{J_3} M_3 - \dot{\omega}_{c_3}. \label{eqn:eb1_2dot}
	\end{align}
	As such, the state and action variables for the yawing module are $s_{\mod^2} = (e_{b_1}, \dot{e}_{b_1}) \in \mathcal{S}_{\mod^2} = \Re^2$ and $a_{\mod^2} = M_3 \in \mathcal{A}_{\mod^2}  =  \Re$, respectively.
	
    \paragraph*{Reflective Equivariance}
	Next, we show that the above yaw error dynamics exhibits an inherent symmetry with respect to the finite cyclic subgroup $G = \{ 1, -1 \}$ of $\ensuremath{\mathsf{GL(1)}}$ generated by $-1$.
	This symmetry ensures the equivariance of the system with respect to the simultaneous flipping of signs for the concatenated pair $(s_{\mod^2}, a_{\mod^2}) = (e_{b_1}, \dot{e}_{b_1}, M_3)$ as illustrated in \Cref{fig:ref_sym_concept}.
	
	Let $\rho_r$ be a group representation acting on a scalar $x \in \Re$ for $g \in G$, defined as $\rho_r(g)x = x$ if $g = 1$, and $\rho_r(g)x = -x$ if $g = -1$.
	This representation describes how scalar state or control variables transform under the action of the reflection symmetry group $G$.
	In particular, the action of $\rho_r(-1)$ corresponds to flipping the sign of the variable.
	The group $G$ acts on $(s_{\mod^2}, a_{\mod^2})$ via the representation $\rho_r$ as
	\begin{align}
		gs_{\mod^2} & = (\rho_r(g) e_{b_1}, \rho_r(g) \dot{e}_{b_1}), \label{eqn:gs_module2} \\
		ga_{\mod^2}  & = (\rho_r(g) M_3). \label{eqn:ga_module2}
	\end{align}
	
	\begin{prop}\label{prop:sym_module2}
		The yaw error dynamics \eqref{eqn:eb1_2dot} is equivariant if $|\dot{\omega}_{c_3}| \ll 1$.
	\end{prop}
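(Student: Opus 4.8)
The plan is to follow the same template as the proofs of \Cref{prop:sym_mono} and \Cref{prop:sym_module1}: recast the second-order scalar dynamics \eqref{eqn:eb1_2dot} as a first-order system $\dot s_{\mod^2} = F(s_{\mod^2}, a_{\mod^2})$ on $\mathcal{S}_{\mod^2} = \Re^2$, then compute $F\circ g$ and $g\circ F$ explicitly and compare them. Writing $s_{\mod^2} = (e_{b_1}, \dot{e}_{b_1})$, the vector field is
\begin{align*}
    F(s_{\mod^2}, a_{\mod^2}) =
    \begin{bmatrix}
        \dot{e}_{b_1} \\
        \frac{1}{J_3} M_3 - \dot{\omega}_{c_3}
    \end{bmatrix},
\end{align*}
and the induced action on the tangent bundle $T\mathcal{S}_{\mod^2}$ is inherited from \eqref{eqn:gs_module2}, namely componentwise multiplication by $\rho_r(g)$.

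First I would dispose of the generator $g = -1$, since $g = 1$ gives equivariance trivially. Applying the action to both arguments gives $gs_{\mod^2} = (-e_{b_1}, -\dot{e}_{b_1})$ and $ga_{\mod^2} = -M_3$, so the first component of $F(gs_{\mod^2}, ga_{\mod^2})$ is $-\dot{e}_{b_1}$, which already matches the first component of $\rho_r(g) F$. The decisive comparison is in the second component: $F\circ g$ yields $-\frac{1}{J_3}M_3 - \dot{\omega}_{c_3}$, whereas $g\circ F = \rho_r(g) F$ yields $-\frac{1}{J_3}M_3 + \dot{\omega}_{c_3}$. The controlled term $\frac{1}{J_3}M_3$ therefore transforms correctly, flipping sign together with the output, so the entire obstruction to equivariance collapses to the single residual $2\dot{\omega}_{c_3}$.

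The main point, and the reason the hypothesis $|\dot{\omega}_{c_3}| \ll 1$ is needed, is that $\dot{\omega}_{c_3}$ is a feedforward drift term determined by the externally prescribed yaw command $b_{1_c}$ through $\dot b_{1_c} = \omega_c\times b_{1_c}$, rather than by the module's own state $(e_{b_1}, \dot{e}_{b_1})$ or action $M_3$. Consequently it is invariant under the reflection action while the remainder of the vector field is odd, so it cannot flip sign and breaks exact equivariance. I would make this observation explicit, then argue that under the stated assumption $\dot{\omega}_{c_3}$ is negligible relative to the control authority $\frac{1}{J_3}M_3$, so that $F\circ g = g\circ F$ holds up to a term of order $|\dot{\omega}_{c_3}|$ and the yaw error dynamics is approximately equivariant. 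I expect the only real difficulty to be stating precisely in what sense the equivariance is \emph{approximate}; I would frame it as exact equivariance of the idealized dynamics obtained by dropping $\dot{\omega}_{c_3}$, which is the regime of a slowly varying yaw reference.
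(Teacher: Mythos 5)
Your proposal is correct and follows essentially the same route as the paper: recast \eqref{eqn:eb1_2dot} as a first-order system $\dot s_1 = s_2$, $\dot s_2 = \frac{1}{J_3}M_3 - \dot\omega_{c_3}$ and verify $F(-s,-a) = -F(s,a)$ once the drift term $\dot\omega_{c_3}$ is dropped. Your explicit identification of $\dot\omega_{c_3}$ as the sole, group-invariant obstruction (and the resulting residual of $2\dot\omega_{c_3}$) simply spells out what the paper's proof leaves as ``straightforward to show,'' so no gap remains.
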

	
	\begin{proof}
        Let $(s_1, s_2) = (e_{b_1}, \dot e_{b_1})\in\Re^2$ be the state variable of the yawing dynamics. The second-order dynamics of \eqref{eqn:eb1_2dot} is rearranged into
        \begin{align*}
            \dot s_1 & = s_2,\\
            \dot s_2 & = \frac{1}{J_3}M_3 -\dot\omega_{c_3},
        \end{align*}
        where the right hand side is consolidated into $F(s,a)\in\mathcal{S}_{\mod^2}\times\mathcal{A}_{\mod^2}\rightarrow\Re^2$. 
        It is straightforward to show that $F(-s,-a)=-F(s,a)$ if $\dot\omega_{c_3}=0$.
	\end{proof}
    In summary, in the modular model of the quadrotor dynamics, the first module for the translational dynamics satisfies the rotational equivariance, and the second module for the yawing error dynamics satisfies the reflective equivariance. 
	
	\begin{figure}[t]
		\centering
		\vspace*{0.3cm}
		\begin{tikzpicture}
			\node[anchor=south west, inner sep=0] (image) at (0, 0) {\includegraphics[width=0.4\textwidth]{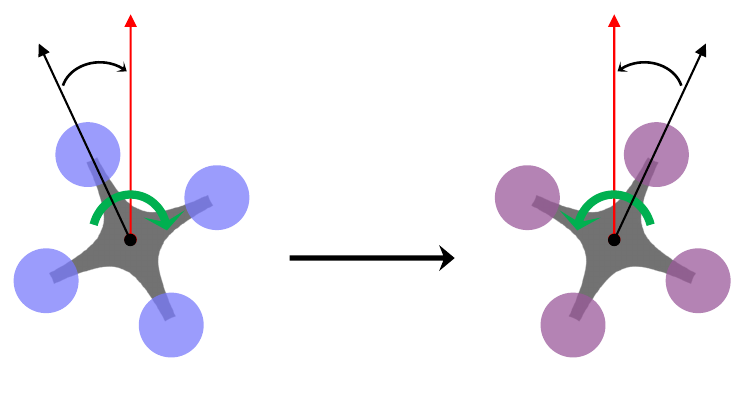}};
			\begin{scope}[x={(image.south east)}, y={(image.north west)}]
				\node at (0.03, 0.9) {$b_1$}; 
				\node at (0.175, 1.) {$b_{1_c}$}; 
				\node at (0.12, 0.89) {$+e_{b_1}$};
				\node at (0.27, 0.395) {$+M_3$};  
				\node at (0.17, 0.05) {$(s,a)$}; 
				\node at (0.495, 0.32) {$g$}; 
				\node at (0.85, 0.05) {$(gs,ga)$}; 
				\node at (0.98, 0.88) {$b_1$}; 
				\node at (0.825, 1.) {$b_{1_c}$}; 
				\node at (0.882, 0.89) {$-e_{b_1}$};
				\node at (0.73, 0.39) {$-M_3$};  
			\end{scope}
		\end{tikzpicture}
		\vspace*{-0.1cm}
		\caption{Illustration of the group action corresponding to the reflection symmetry with respect to the finite cyclic subgroup $G = \{ 1, -1 \}$. 
			The reflectional action $g \in G$ transforms the original state-action pair $(s, a)$ (left) into the reflected pair $g(s, a)$ (right), preserving the symmetry of the yaw error dynamics.}
		\label{fig:ref_sym_concept}
	\end{figure}

	% ========================================================== %
	%                       Methodology                          %
	% ========================================================== %
	\section{Equivariant Reinforcement Learning for Quadrotor Dynamics}\label{sec:EquivRL}	

    In the preceding section, we identified the equivariance properties of the quadrotor dynamics, and we showed that a set of trajectories satisfying the dynamics can be generated by applying the group action to a single trajectory. 
    Or, such a set of trajectories related to the group action can be identified by an equivalent class. 
    In reinforcement learning that requires an extensive set of multiple trajectories, the former can be utilized to diversify the training data with the group action, and the latter can be encoded for generalization capabilities, where the outcome of learning for a single trajectory is extended to other trajectories related by the group action. 

	Equivariant reinforcement learning is to exploit these properties to enhance sampling efficiency and generalization abilities. 
    This section shows that the equivariance properties of the dynamics result in symmetry properties of the optimal value function and the optimal policy in reinforcement learning, which can be implemented via equivariant neural networks.
    And we illustrate how these can be applied to the quadrotor dynamics.

	% -------------- Subsection -------------- %
    \subsection{Equivariant Reinforcement Learning}

	Here, we formulate the equivariance reinforcement learning for the continuous-time Markov decision process.
    Suppose that the state evolves according to the equation of motion $\dot s = F(s,a)$ for $F:\mathcal{S}\times\mathcal{A}\rightarrow T\mathcal{S}$, and the reward is given by $r:\mathcal{S}\times\mathcal{A}\rightarrow\Re$.  
	For $0<\gamma<1$, let the value function of a policy $\pi:\mathcal{S}\rightarrow\mathcal{A}$ be defined as
	\begin{align}
		V_\pi (t,s(t)) = \int_t^\infty \gamma^{\tau-t} r(s(\tau),a(\tau)) d\tau, \label{eqn:V}
	\end{align}
    where the trajectory $(s(\tau),a(\tau))$ for $\tau\in[t,\infty)$ is sampled by the policy $\pi$ and the equation of motion. 
	The objective is to identify the optimal policy $\pi^*(s(t))$ that maximizes $V_\pi(t,s(t))$.

	We show that when the dynamics and the reward function are symmetric with respect to a group action, the resulting value function and the optimal policy satisfy the following properties. 
    \begin{prop}\label{prop:ERL}
	Consider a continuous-time MDP to maximize the value function \eqref{eqn:V}.
	Suppose that $F:\mathcal{S}\times\mathcal{A}\rightarrow T\mathcal{S}$ is equivariant and the reward $r:\mathcal{S}\times\mathcal{A}\rightarrow\Re$ is invariant with respect to a group action $g:\mathcal{S}\times\mathcal{A}\rightarrow\mathcal{S}\times\mathcal{A}$, i.e., 
	\begin{gather}
		F\circ g = g\circ F, \label{eqn:F_eqv} \\
		r\circ g = r. \label{eqn:r_inv}
	\end{gather}
	Then, the following properties hold:
	\renewcommand{\theenumi}{(\roman{enumi}}
	\begin{enumerate}
		\item The optimal value function is G-invariant under the group action, i.e., $V^*_\pi = V^*_\pi\circ g$.
		\item The optimal policy is G-equivariant under the group action, i.e., $\pi^* \circ g = g\circ \pi^*$. 
	\end{enumerate}
	\end{prop}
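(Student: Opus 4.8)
The plan is to reduce both claims to a single structural fact: the group action induces a reward-preserving bijection between the feasible state--action trajectories emanating from $s$ and those emanating from $gs$. Once this correspondence is in hand, part (i) follows by taking suprema, and part (ii) by comparing initial actions.

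First I would establish the trajectory correspondence. Given any trajectory $(s(\tau), a(\tau))$, $\tau\in[t,\infty)$, consistent with $\dot s = F(s,a)$ and $s(t)=s$, I would set $\tilde s(\tau)=g\,s(\tau)$ and $\tilde a(\tau)=g\,a(\tau)$. Because the action of $g$ is linear (matrix multiplication by a representation such as \eqref{eqn:rho_theta_g}), differentiation commutes with it, so $\dot{\tilde s}=g\,\dot s = g\,F(s,a)$; the equivariance hypothesis \eqref{eqn:F_eqv} then turns this into $F(g s, g a)=F(\tilde s,\tilde a)$, showing $(\tilde s,\tilde a)$ is again feasible, now from $\tilde s(t)=gs$. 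Invertibility of $g$ in $G$ supplies the inverse map, making this a genuine bijection. The reward invariance \eqref{eqn:r_inv} gives $r(\tilde s,\tilde a)=r(s,a)$ pointwise in $\tau$, so integrating against the shared discount kernel $\gamma^{\tau-t}$ in \eqref{eqn:V} shows the two corresponding trajectories carry identical returns.

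For part (i), I would then argue that, since the correspondence is a reward-preserving bijection, the sets of attainable returns from $s$ and from $gs$ coincide, hence so do their suprema, giving $V^*(gs)=V^*(s)$ where $V^*:=\sup_\pi V_\pi$ denotes the optimal value $V^*_\pi$; this is exactly $V^*_\pi=V^*_\pi\circ g$. For part (ii), I would take an optimal trajectory $(s^*,a^*)$ from $s$, so that $a^*(t)=\pi^*(s)$, and observe that its image $(g s^*, g a^*)$ is feasible from $gs$ and attains $V^*(gs)$ by part (i), hence is optimal from $gs$; reading off its initial action yields $\pi^*(gs)=g\,a^*(t)=g\,\pi^*(s)$, i.e. $\pi^*\circ g = g\circ\pi^*$.

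The main obstacle will be this last step, where extracting policy equivariance silently assumes the optimal trajectory is essentially unique, since a priori $\pi^*$ is only one of possibly many maximizers. I would handle this either by assuming uniqueness of the optimal policy (standard when $V^*$ is differentiable with a unique maximizing action) or by noting that the $g$-image of any optimal policy is again optimal, so the optimal set is $G$-invariant and admits a $G$-equivariant selection. A complementary check I would run is the Hamilton--Jacobi--Bellman route: differentiating $V^*=V^*\circ g$ for an orthogonal representation gives $\nabla V^*(gs)=g\,\nabla V^*(s)$, and feeding the equivariance of $F$ and invariance of $r$ into $\pi^*(s)=\arg\max_a\{r(s,a)+\nabla V^*(s)\cdot F(s,a)\}$ reproduces $\pi^*(gs)=g\,\pi^*(s)$; this needs smoothness of $V^*$ but makes the role of each hypothesis transparent.
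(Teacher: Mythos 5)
Your proposal follows essentially the same route as the paper's proof: both transport feasible trajectories through the group action (justified by linearity of the representation plus equivariance of $F$), use reward invariance to conclude the transported trajectory carries the same return, deduce that images of optimal trajectories are optimal (the paper via a contradiction argument, you via a reward-preserving bijection and suprema --- the same idea in different clothing), and then read off the initial action to obtain $\pi^*\circ g = g\circ\pi^*$. The only substantive difference is that you explicitly flag the implicit uniqueness assumption in this last step --- the paper silently takes the transformed optimal trajectory to be the one generated by $\pi^*$ from $gs_0$ when it writes $\tilde a^* = \pi^*(\tilde s)$ --- so your treatment is, if anything, more careful than the paper's on that point.
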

    \begin{proof}
        The value function presented in \eqref{eqn:V} is defined as a function of the time $t$, and the state $s(t)$ at that time $t$, where the state and action trajectory afterward is determined by the given policy $\pi$.
        Instead of relying on the policy to compute the trajectory starting from the given state, we relax the definition of $V$ as a function of the state and action trajectory as follows. 
        Let $\phi = (s(t), a(t)):[t,\infty)\rightarrow\mathcal{S}\times\mathcal{A}$ be a trajectory over the interval $[t,\infty)$ starting from $s(t)=s_0\in\mathcal{S}$. 
        The value of the trajectory $V(\phi)$ is computed by the right hand side of \eqref{eqn:V}. 

		According to \Cref{prop:sym_mono}, $\tilde\phi = (\tilde s(t), \tilde a(t))= g\phi = (gs(t), ga(t))$ is another trajectory of the system, and its value is
		\begin{align}
            V(\tilde\phi) & = \int_t^\infty \gamma^{\tau-t}\, r(\tilde s(\tau),\tilde a(\tau)) d\tau \nonumber\\
                          & = \int_t^\infty \gamma^{\tau-t} \times (g\circ r)(s(\tau),a(\tau)) d\tau,\label{eqn:prop4_1}
		\end{align}
		which reduces to $V(\phi)$ due to the invariance of the reward given by \eqref{eqn:r_inv}.
        Thus, $V(\phi) = V(\tilde\phi)$, which implies that the state-action trajectory transformed by the group action $g$ has the same value as the original trajectory for any $g\in G$.

        Next, let $\phi^*= (s^*(t), a^*(t))$ be the optimal state and action trajectory starting from $s_0$ at $t$, driven by the optimal policy $\pi^*$. 
        Also, let $\tilde \phi^* = g\phi^*$ be the optimal trajectory transformed by a group element $g\in G$. 
        From the above, they have the same value, i.e., 
        \begin{align}
            V(\phi^*) = V(\tilde \phi^*).\label{eqn:prop4_0}
        \end{align}

        Now, we show that $\tilde\phi^* = g\phi^*$ is another optimal trajectory starting from $gs_0$ by contradiction. 
        Suppose $\tilde\phi^*$ is not optimal. 
        Then, there exists another trajectory $\tilde\phi'=(\tilde s'(t), \tilde a'(t))$ starting from the same state $\tilde s'(t)= gs_0$ at $\tilde\phi^*$ but with a larger value, i.e.,
        \begin{align*}
            V(\tilde \phi^*) < V(\tilde \phi').
        \end{align*}
        We transform both trajectories $\tilde\phi^*$ and $\tilde\phi'$ by $g^{-1}$ to obtain $\phi^*$ and $g^{-1}\tilde \phi'$, respectively.  
        Since the value does not change by any transformation, the above inequality still holds after being transformed by $g^{-1}$, i.e.,
        \begin{align*}
            V(\phi^*) < V(g^{-1}\tilde \phi').
        \end{align*}
        The initial state of $g^{-1}\tilde\phi'$ is given by $g^{-1}\tilde s'(t) = g^{-1} g s_0 = s_0$.
        This implies that there exists another state action trajectory, namely $g^{-1}\tilde\phi'$ starting from $s_0$ with a value larger than the optimal value $V(\phi^*)$, which contradicts the fact that $\phi^*$ is the optimal path. 
        Therefore, the transformed trajectory $\tilde\phi^* = g\phi^*$ is the optimal trajectory starting from $gs_0$, and \eqref{eqn:prop4_0} shows (i).

        Since both of $\phi^*$ and $\tilde\phi^*=g\phi^*$ are generated by the optimal policy $\pi^*$, we have $a^* = \pi^*(s)$ and $\tilde a^* = \pi^*(\tilde s)$. 
        By combining these,
        \begin{align*}
            \pi^*(s) = a^* = g^{-1}\tilde a^* = g^{-1} \pi^*( \tilde s ) = g^{-1}\pi^*(g s).
        \end{align*}
        Taking the group action to both sides yields (ii).
	\end{proof}

	\begin{remark}
	    While \Cref{prop:ERL} is developed for the continuous-time system, the invariance of the value function is applied to the state-action value function that is commonly used in the reinforcement learning of discrete-time systems. 
	Let the time be discretized by the sequence $\{t_0, t_1,\ldots\}$. The state-action value function at $t_k$ is given by
	\begin{align*}
        Q(s_k, a_k) = \int_{t_k}^{t_{k+1}} \gamma^{\tau-t_k} r(s(\tau), a_k) d\tau + V^*(t_{k+1}, s(t_{k+1})),
	\end{align*}
	where the state trajectory $s(\tau)$ over the interval $[t_k, t_{k+1}]$ is governed by $\dot s = F(s, a_k)$ with the boundary condition $s(t_k)=s_k$, for a fixed $a_k\in\mathcal{A}$. 
	From the discussion following \eqref{eqn:prop4_1}, the first term of the right hand side is $G$-invariant, and so is the second term from \Cref{prop:ERL}.
	Thus, the state-action value function is also $G$-invariant. 
	\end{remark}
	
	The invariance of the value function and the equivariance of the policy presented in \Cref{prop:ERL} are particularly useful for improving the sampling efficiency and generalization capability of reinforcement learning. The objective of the common actor-critic framework is to learn the value function and the policy from sampled data, where both the quantity and quality of the data play an important role in training.
	\Cref{prop:ERL} implies that, under the given assumption, the value $V(s)$ learned at a particular state $s$ is automatically generalized to the value $V(gs)$ at $gs$ for any $g \in G$. Similarly, the optimal action $\pi(s)$ at the state $s$ can be used to compute the optimal action $\pi(gs) = g\pi(s)$ at another state $gs$ for any $g \in G$.
	This effectively reduces the domain of learning from $\mathcal{S} \times \mathcal{A}$ to the quotient space $\mathcal{S} \times \mathcal{A} / \sim$.

	For implementation in deep reinforcement learning, the value function and the policy can be modeled by an equivariant neural network~\cite{finzi2021practical}, which is a type of neural network that is specifically designed to respect the above symmetries in the input data and the output. 
	
	Specifically, assume that the state and the action are embedded in the Euclidean space, such that the group action on the state and action is expressed in terms of the representation, namely $\rho_s(g)$ and $\rho_a(g)$, respectively. 
	The $G$-invariant value function and the state-action value function can be modeled by an equivariant neural network satisfying
	\begin{align} 
	    V(s) = V(\rho_s(g)s), \quad
	    Q(s, a) = Q(\rho_s(g)s, \rho_a(g)a). \label{eqn:value_equiv}
	\end{align}	
	And the $G$-equivariant actor network for the policy is modeled by a neural network satisfying
	\begin{align} 
	    \pi (\rho_s(g)s) = \rho_a(g) \pi(s). \label{eqn:pi_equiv}
	\end{align}	
	Then, any deep reinforcement learning technique can be applied to the above neural network architecture without any further modification required. 
	In other words, the presented equivariant reinforcement learning framework is readily integrated with any other reinforcement approaches.

	% -------------- Subsection -------------- %
	\subsection{Equivariant Monolithic RL for Quadrotor UAV}
	
	Here we formulate the equivariant RL for the quadrotor dynamics using the symmetry properties identified in \Cref{sec:SymDyn}. 
	To illustrate the implication of the equivariant RL, we consider the planar motion of quadrotors, as shown in \Cref{fig:equivAC}, where the plane is normal to the gravity. 
	Consider a quadrotor at a state $s\in\mathcal{S}$, and let $a\in\mathcal{A}$ be the optimal action at the state $s$. 
	If the quadrotor is rotated by an angle $\theta$ according to the symmetry discovered at \Cref{prop:sym_mono}, into a new state $\rho_s(g) s$, then the corresponding optimal action at the new state is given by $\rho_a(g) a$ according to \eqref{eqn:pi_equiv}, which is the rotation of $a$ by the same angle $\theta$.
	In other words, rotating the quadrotor yields the rotation of the optimal action.
	Or equivalently, the diagram at \Cref{fig:equivAC}.(a) commutes. 
	Next, for these rotations of the state and the action, the state value or the state-action value remain unchanged from \eqref{eqn:value_equiv}.
	
	\begin{figure}[b]
		\centering
		\begin{tabular}{cc}
			\subfigure[Equivariant Actor]{\includegraphics[width=0.47\columnwidth]{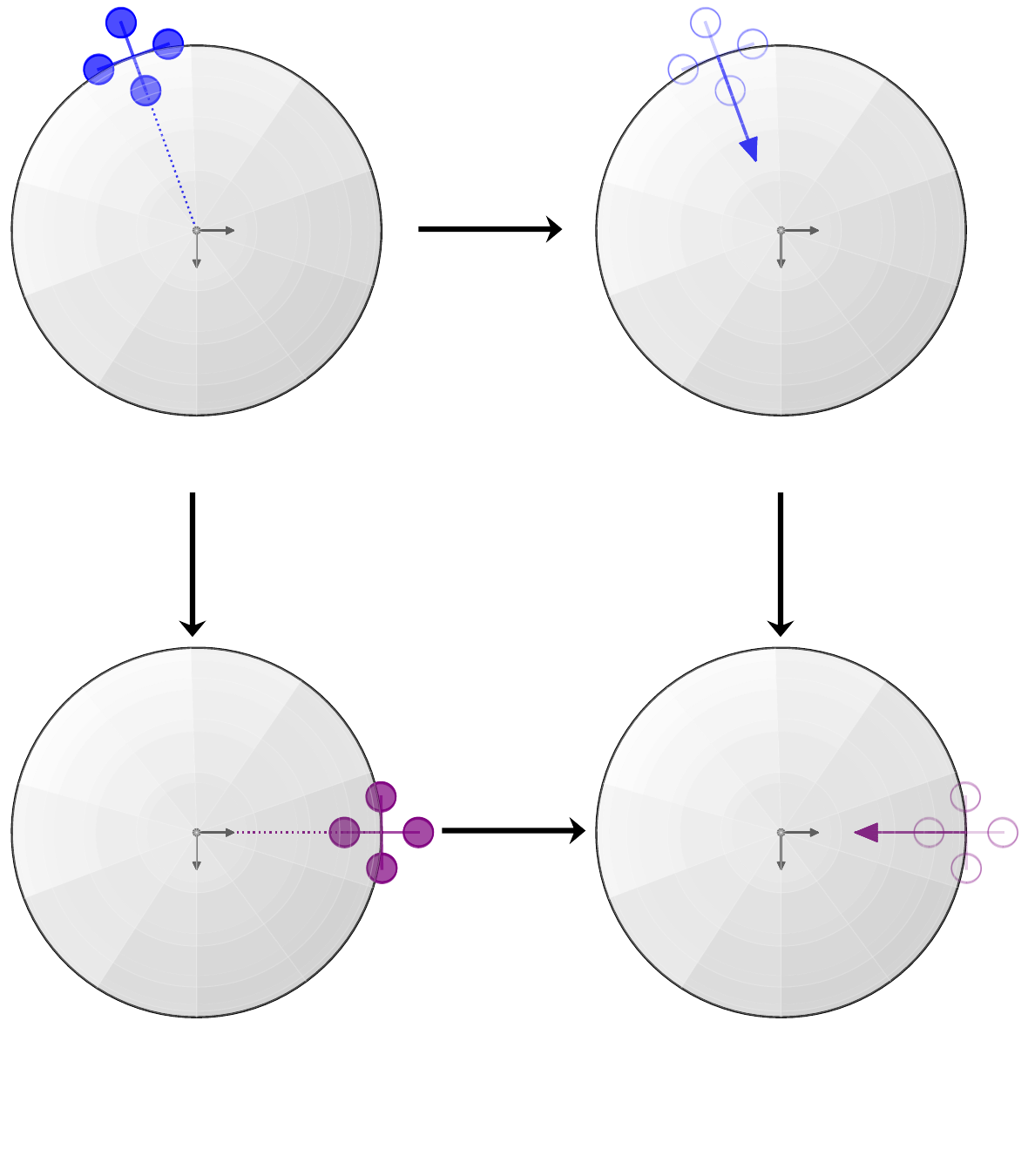} \label{fig:equivActor} \hspace{-0.47cm}} 
			& \subfigure[Invariant Critic]{\includegraphics[width=0.48\columnwidth]{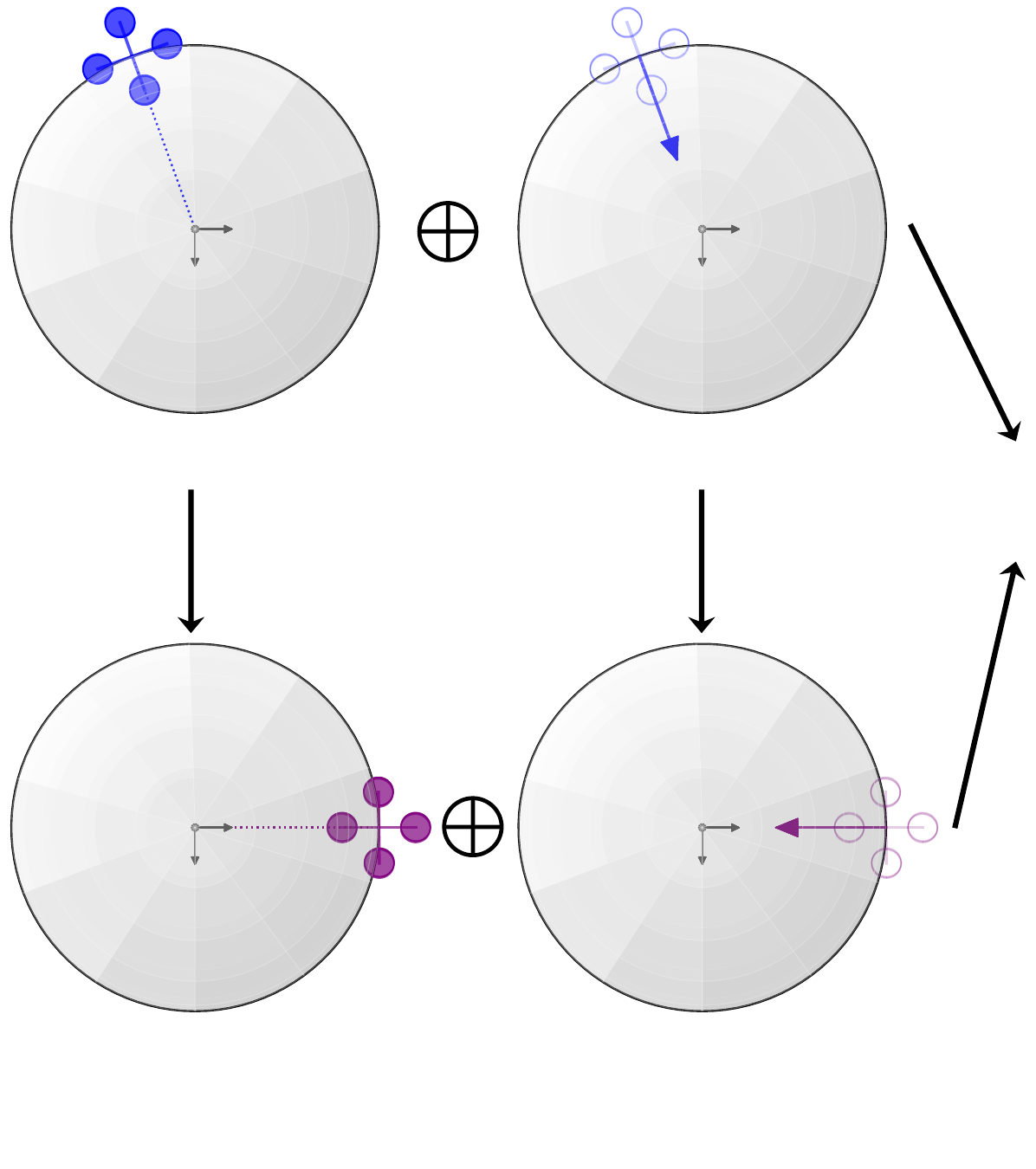} \label{fig:equivCritic}}
		\end{tabular}
		\caption{Illustration of equivariant actors and invariant critics.}
		\label{fig:equivAC}
        \small
		\begin{tikzpicture}[overlay, remember picture]
			\node at (-3.38, 4.32) {$s$}; 
			\node at (-1., 4.32) {$a$}; 
			\node at (-2.2, 5.38) {$\pi$}; 
			\node at (-2.93, 3.97) {$\rho_s(g)$}; 
			\node at (-0.55, 3.97) {$\rho_a(g)$}; 
			\node at (-3.37, 1.83) {$\rho_s(g)s$}; 
			\node at (-0.95, 1.83) {$\rho_a(g)a$}; 
			\node at (-2.1, 2.96) {$\pi$}; 
			
			\node at (1.0, 4.35) {$s$}; 
			\node at (3.08, 4.35) {$a$}; 
			\node at (1.42, 3.99) {$\rho_s(g)$}; 
			\node at (3.51, 3.99) {$\rho_a(g)$}; 
			\node at (1.02, 1.84) {$\rho_s(g)s$}; 
			\node at (3.1, 1.84) {$\rho_a(g)a$}; 
			\node at (4.37, 4.14) {$Q$}; 
		\end{tikzpicture}
	\end{figure}	

	First, we develop a monolithic RL framework that respects these equivariance properties.
	The control objective is to minimize the tracking errors for a set of goal states defined by the desired position $x_d\in\Re^3$, desired velocity $v_d\in\Re^3$, desired heading direction $b_{1_d}\in\Sph^2$, and desired angular velocity $\Omega_d\in\Re^3$.
	Here we assume that the goal states are fixed as $x_d = v_d = [0, 0, 0]^T$, while $\Omega_d = [0, 0, \omega_{c_3}]^T$, where the desired angular velocity for yawing is set as $\Omega_{d_3} = \omega_{c_3}$ according to \eqref{eqn:eb1_dot}.	
	In the training process, we define the observation space using error terms, 
	\begin{align*} 
		o_\mono = (e_x, e_{I_x}, e_v, R, e_{b_1}, e_{I_{b_1}}, e_\Omega) \in \Re^{14} \times \SO,
	\end{align*} 
	where the tracking errors are defined as $e_x = x - x_d\in\Re^3$, $e_v = v - v_d\in\Re^3$, and $e_\Omega = \Omega - \Omega_d\in\Re^3$.
	Also, the position and yaw integral terms, $e_{I_x}\in\Re^3$ and $e_{I_{b_1}}\in\Re$, are introduced to mitigate steady-state errors.
	These terms are updated according to
	\begin{align} 
		\dot e_{I_x} = -\alpha e_{I_x} + e_x, \quad
        \dot e_{I_{b_1}} = -\beta e_{I_{b_1}} + e_{b_1}.\label{eqn:eI_dot}
	\end{align}
	where $\alpha, \beta > 0$ prevent the potential overflow of $e_{I_x}$ and $e_{I_{b_1}}$ due to noise or delays during real-world deployment.
	
    As discussed at \Cref{sec:dyn_mono}, the monolithic model of the quadrotor dynamics is equivariant under the group action that corresponds to the rotation about the vertical direction. 
    The additional state variables for the integrated errors governed by \eqref{eqn:eI_dot} also satisfy the equivariance.
    Specifically, the group action on the state given by \eqref{eqn:gs_mono} is extended to the observation variable, and it is described by the following representation:
	\begin{align} 
		 \rho_{o_\mono}(g) = 4 \rho_\theta(g) \oplus 3 \rho_e(g), \label{eqn:o_mono_rep}
	\end{align} 
	where $\oplus$ corresponds to the direct sum, and the shorthand $c \rho(g)$ for a positive integer $c$ denotes $\rho(g) \oplus \rho(g) \oplus \ldots \oplus \rho(g)$, where the direct sum is repeated $c$ times.
    Here $\rho_\theta(g)$ is introduced at \eqref{eqn:rho_theta_g} and $\rho_e(g)$ corresponds to the identity matrix of an appropriate dimension. 
	More explicitly, in \eqref{eqn:o_mono_rep}, each $\rho_\theta(g)$ acts on $(e_x, e_{I_x}, e_v, R)$, while each $\rho_e(g)$ acts on $(e_{b_1}, e_{I_{b_1}}, e_\Omega)$.

    Next, the representation of the group action \eqref{eqn:ga_mono} on the action variables,  $a_\mono = (f, M)$, is given by
	\begin{align} 
        \rho_{a_\mono}(g) = \rho_e(g) \oplus \rho_e(g).\label{eqn:a_mono_rep}
	\end{align} 
	In other words, $a_\mono$ remains unchanged by the group. 
	
	Finally, the reward function is designed as 
	\begin{align*}
		r_\mono & = -k_x \|e_x\|^2 -k_{I_x} \|e_{I_x}\|^2 -k_{b_1} |e_{b_1}| -k_{I_{b_1}} |e_{I_{b_1}}|^2  \nonumber\\
		& \quad -k_v \|e_v\|^2-k_\Omega \|e_\Omega\|^2 - r_\textrm{crash},
	\end{align*}
	where each $k \in\Re$ are positive weighting constants.
	The first four terms prioritize minimizing position and yaw errors, the next two terms penalize overly aggressive maneuvers, and the final term $r_\textrm{crash} \in\Re$ imposes a large penalty to avoid collisions and out-of-bounds flight.
	Importantly, it is straightforward to show the reward remains invariant under rotations, as taking the norms of vectors or the absolute values of scalars is unaffected by any rotation.

    Then, from \Cref{prop:ERL}, the optimal policy is $G$-equivariant and the value is $G$-invariant. 
    In deep RL, these symmetries are enforced by utilizing equivariant neural networks~\cite{finzi2021practical} to model the policy and the value. 
    Specifically, the equivariant neural networks provide a multilayer perception that respects the equivariance or the invariance for given representations \eqref{eqn:o_mono_rep} and \eqref{eqn:a_mono_rep} of the group action.  
    Once the policy and the value are modeled by the equivariant neural network, any deep RL approach can be applied. 
	
	% -------------- Subsection -------------- %
    \subsection{Equivariant Modular RL for Quadrotor UAV}
    This section extends the equivariant RL to the modular framework introduced in \Cref{sec:dyn_mod}, where the quadrotor dynamics are decomposed into the translational module and the yawing module.
	By modularizing the RL structure, each module is focused on specific control objectives, facilitating efficient learning.
	
	\subsubsection{Translational Motion RL Module}
	This module is designed to handle the translational motion of the quadrotor and is developed to be equivariant for rotations about the vertical axis. 
	The observation space is defined as
	\begin{align*} 
	o_{\mod^1} = (e_x, e_{I_x}, e_v, b_3, e_{\omega_{12}}) \in \Re^{12}\times \Sph^2,
	\end{align*}	
	where $e_{\omega_{12}} = \omega_{12} - \omega_{d_{12}} \in \Re^3$ is the tracking error for the angular velocity of $b_3$.
	Similar with the above, we assume that $x_d = v_d = \omega_{d_{12}} = [0, 0, 0]^T$ during training.
	The group action on the observation state is represented by
	\begin{align} 
        \rho_{o_{\mod^1}}(g) = 5 \rho_\theta(g),\label{eqn:rho_mod_1}
	\end{align}	
	where each $\rho_\theta(g)$ acts on $(e_x, e_{I_x}, e_v, b_3, e_{\omega_{12}})$, i.e., each state is rotated by $\theta$.
	Similarly, for $a_{\mod^1} = (f, \tau)$, the representation of the group action is
	\begin{align} 
        \rho_{a_{\mod^1}}(g) = \rho_e(g) \oplus \rho_\theta(g),\label{eqn:rho_a_mod_1}
	\end{align}	
	where the control thrust $f$ remains invariant, while the torque $\tau$ that is resolved in the inertial frame is rotated. 
	
	Lastly, to minimize the translational tracking errors, the reward function is structured as
	\begin{align*}
		r_{\mod^1} = & -k_x \|e_x\|^2 -k_{I_x} \|e_{I_x}\|^2 -k_v \|e_v\|^2  -k_{\omega_{12}} \|e_{\omega_{12}} \|^2  \nonumber\\
		& - r_\textrm{crash},
	\end{align*}
	where $k\in\Re$ are positive weighting constants.
	The first two terms minimize position and steady-state errors, while the latter two terms suppress aggressive maneuvers.
    It is straightforward to show that the reward is invariant with respect to the group action of \eqref{eqn:rho_mod_1}.
    Then, the results of \Cref{prop:ERL} are applied, and any deep RL approach can be utilized with the equivariant neural network constructed by the representations \eqref{eqn:rho_mod_1} and \eqref{eqn:rho_a_mod_1}.
	
	\subsubsection{Yaw RL Module}
	The yaw RL module is responsible for aligning the quadrotor's first body-fixed axis $b_1$ with the desired direction $b_{1_c}$ by controlling the yaw moment $a_{\mod^2} = M_3$.
	For the yaw module, the observation space is defined as 
	\begin{align}
		o_{\mod^2} = (e_{b_1}, e_{I_{b_1}}, \dot{e}_{b_1}) \in \mathbb{R}^3,
	\end{align}
	where $e_{I_{b_1}}$ is the integral of $e_{b_1}$ to mitigate the yaw steady-state error.
	
	Given the reflection symmetry of the yaw error dynamics under the cyclic group $G = \{1, -1\}$, the representations for the group action for the observation and the action are given by
	\begin{align}
		\rho_{o_{\mod^2}}(g) = 3 \rho_r(g), \quad
		\rho_{a_{\mod^2}}(g) = \rho_r(g),
	\end{align}
    where $\rho_r(g)\in\{1,-1\}$. 
    According to \Cref{prop:sym_module2}, this symmetry implies that the yaw dynamics remains equivariant under the simultaneous sign flips of $(e_{b_1}, e_{I_{b_1}}, \dot{e}_{b_1}, M_3)$.
	
	Finally, the reward function is designed to penalize yaw tracking errors as
	\begin{align*}
		r_{\text{mod}^2} = -k_{b_1} |e_{b_1}| - k_{I_{b_1}} |e_{I_{b_1}}|^2 - k_{\Omega_3} |\dot{e}_{b_1}|^2 - r_\text{crash},
	\end{align*}
    which is invariant under the group action. 
    Similar with the above, the equivariant RL can be applied to the yaw module.

	% ========================================================== %
	%                  Experiments 
	% ========================================================== %	
	\section{Experiments}\label{sec:Exp}	
	In this section, we present learning results for the proposed equivariant RL for quadrotors, where the presented monolithic and modular equivariant RL frameworks are compared with their non-equivariant counterparts.
    Specifically, we benchmark four distinct frameworks: two traditional non-equivariant approaches using standard multi-layer perceptrons (MLPs), namely \textbf{Mono-MLP} (Monolithic Policy with MLP) and \textbf{Mod-MLP} (Modular Policies with MLP), and two proposed equivariant approaches utilizing equivariant multilayer perceptrons (EMLPs), referred to as \textbf{Mono-EMLP} (Monolithic Policy with EMLP) and \textbf{Mod-EMLP} (Modular Policies with EMLP).
	The flight performance of these frameworks is evaluated through simulation and real-world flight experiments.
	The source code is available at \url{https://github.com/fdcl-gwu/gym-rotor}. 
		
	% -------------- Subsection -------------- %
	\subsection{Training Details}
    This section details the implementation and training of the proposed RL framework, covering network structures, domain randomization techniques to mitigate the sim-to-real gap, policy regularization strategies for smooth control, and selected hyperparameters.

	The proposed equivariant RL framework is compatible with any single- or multi-agent RL algorithms. 
	Here, we implement the equivariant RL using three distinct RL techniques to evaluate its performance under varying RL formulations.
	Specifically, we employ Proximal Policy Optimization (PPO)\cite{schulman2017proximal}, an on-policy algorithm, alongside Twin Delayed Deep Deterministic Policy Gradient (TD3)\cite{fujimoto2018addressing} and Soft Actor-Critic (SAC)~\cite{haarnoja2018soft}, which represent deterministic and stochastic off-policy approaches, respectively.
	
	Furthermore, as shown in \cite{yu2024modular}, the modular framework supports two training strategies: (i) independent module training without information exchange, known as Decentralized Training and Decentralized Execution (DTDE), and (ii) cooperative training using multi-agent RL methods, termed Centralized Training and Decentralized Execution (CTDE). 
	In this work, we adopt the CTDE framework, as DTDE often suffers from non-stationarity due to the lack of inter-module communication, whereas CTDE’s centralized critic networks facilitate better coordination and higher rewards.

	During training, the initial states were randomly sampled from pre-defined distributions to encourage diverse exploration.
    Specifically, the quadrotor was sampled from a $1 \mathrm{m}^3$ box-shaped space at the beginning of each episode.
	Additionally, we normalize the reward signal to the range $[0, 1]$, and we scale the state and action spaces to the interval of $[-1, 1]$ at each training step for better computational properties. 
	Unlike existing approaches that often rely on auxiliary techniques or pre-training, all four frameworks are successfully trained through model-free learning.

	% -------------- Subsubsection -------------- %
    \paragraph*{Network Structures}
	The non-equivariant architectures, Mono-MLP and Mod-MLP, utilize standard MLPs for their critic and actor networks. 
	Their critic networks are designed as a two-layer MLP with 62 hidden nodes per layer, ensuring sufficient capacity to approximate the action-value function.
	The actor network architectures differ depending on modular or monolithic designs.
    Mono-MLP employs a single actor network with two layers, each containing 24 hidden nodes.
	In Mod-MLP, the actor network of the first module is designed as a two-layer MLP with 16 hidden nodes, while the second module's actor network adopts a two-layer MLP with 4 hidden nodes.
	
	Next, for the equivariant approaches, namely Mono-EMLP and Mod-EMLP, the equivariant actor and critic networks are implemented using the EMLP library~\cite{finzi2021practical}, 
	which provides a systematic framework for generating equivariant layers under specified group actions, ensuring these layers inherently respect the underlying symmetries of the data.
    To initialize EMLP layers, we specify the symmetry group, the input and output representations as discovered in \Cref{sec:EquivRL}, and the network architecture, described by the number of layers and the number of channels (feature dimension) per layer.
	The critic networks in both Mono-EMLP and Mod-EMLP frameworks consist of two EMLP layers, each with 62 equivariant channels. 
	For the actor network, Mono-EMLP employs two EMLP layers with 24 equivariant channels per layer, similar to Mono-MLP. 
	Lastly, the actor network within Mod-EMLP's first module is implemented as two EMLP layers with 16 equivariant channels, while the second module is designed as two EMLP layers with 4 equivariant channels.
	
	\begin{table}[b]
		\centering
		\setlength{\tabcolsep}{1.1em} % for the horizontal padding
		\renewcommand{\arraystretch}{1.1}  % for the vertical padding
		\caption{Quadrotor parameters}
		\label{tab:nominal_params}
		\begin{tabular}[t]{lc}
			\toprule
			Parameter & Nominal Value\\
			\midrule
			Mass, $m$ & \qty{2.15}{\kilogram} \\
			Arm length, $d$ & \qty{0.23}{\meter} \\
			Moment of inertia, $J$ & (0.022, 0.022, 0.035) \unit{\kilogram \meter ^2} \\
			Torque-to-thrust coefficient, $c_{\tau f}$ & 0.0135 \\
			Thrust-to-weight coefficients, $c_{tw}$ & 2.2 \\
			\bottomrule
		\end{tabular}
		\vspace*{-0.2cm}
	\end{table} 

	% -------------- Subsubsection -------------- %
	\paragraph*{Domain Randomization}  \label{sec:DM}
    Bridging the gap between simulation and real-world environments is a common challenge when deploying reinforcement learning (RL) agents trained in numerical simulations to real-world scenarios.
	To address this, domain randomization has been effectively applied~\cite{molchanov2019sim}. 
	This approach promotes the development of RL agents that exhibit adaptive and robust behaviors capable of generalizing across a wide range of conditions by randomizing properties (e.g., mass or arm length) during training.
	In this study, the simulator’s physical parameters, listed in~\Cref{tab:nominal_params}, are uniformly sampled within a range of $\pm 10\%$ of their nominal values at the start of each episode, thereby improving robustness to real-world variations.

	\begin{table}[t]
		\centering
		\setlength{\tabcolsep}{1.1em} % for the horizontal padding
		\renewcommand{\arraystretch}{1.1}  % for the vertical padding
		\caption{Hyperparameters used for RL training}
		\label{tab:hyper_params}
		\begin{tabular}{lc}
			\toprule
			\multicolumn{1}{c}{Parameter} & Value \\ 
			\midrule
			Optimizer & AdamW \\
			Discount factor, $\gamma$ & 0.99 \\
			Actor learning rate & $3 \cdot 10^{-4} \rightarrow 1 \cdot 10^{-5}$ \\
			Critic learning rate & $2 \cdot 10^{-4} \rightarrow 1 \cdot 10^{-5}$ \\ 
			Maximum global norm & 100 \\
			\hline
			\multicolumn{2}{l}{Soft Actor-Critic (SAC)} \\ 
			~~Replay buffer size & $10^6$ \\
			~~Batch size & 256 \\
			~~Target update interval & 3 \\
			~~Entropy regularization coefficient, $\alpha$ & 0.05 \\
			\hline
			\multicolumn{2}{l}{Twin Delayed DDPG (TD3)} \\ 
			~~Target smoothing coefficient, $\tau$ & 0.005 \\ 
			~~Exploration noise & 0.3 $ \rightarrow$ 0.05 \\
			~~Target policy noise & 0.2 \\
			~~Policy noise clip & 0.5 \\
			\hline
			\multicolumn{2}{l}{Proximal policy optimization (PPO)} \\ 
			~~Time horizon & 7000 \\
			~~Number of epochs & 20 \\
			~~Minibatch size & 128 \\
			~~Clipping ratio, $\epsilon$ & 0.2 \\
			~~GAE parameter, $\lambda$ & 0.9 \\
			~~Entropy coefficient & $1 \cdot 10^{-2}$ \\
			~~L2 regularization coefficient & $1 \cdot 10^{-4}$ \\
			\bottomrule
		\end{tabular}
	\end{table}

	% -------------- Subsubsection -------------- %
	\paragraph*{Smooth Control} \label{sec:smooth} 
    Another major challenge in deploying reinforcement learning (RL) is that transferred RL policies often generate physically unrealistic, high-frequency control signals. 
    These oscillatory motor signals can degrade performance and potentially damage hardware, leading to overheating and mechanical failure. 
    To address this issue, we incorporate regularization terms into the policy training process, including temporal, spatial, and magnitude regularization, as motivated by prior work~\cite{yu2024modular}.
	
	\begin{figure*}[t] 
		\centering
		\begin{tabular}{cccc}
			\hspace{-0.5cm}
			\subfigure[PPO]{\includegraphics[width=0.33\textwidth]{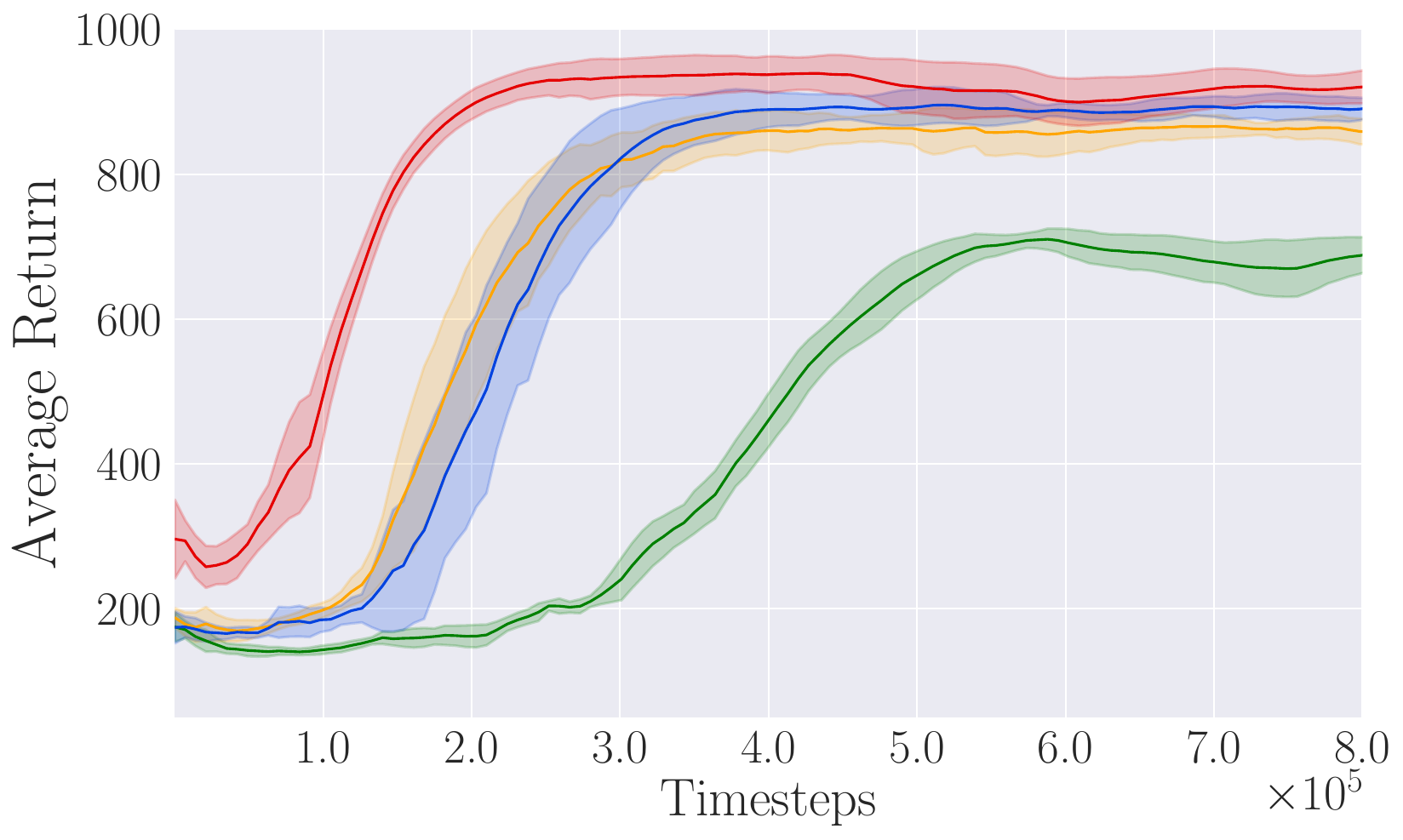} \label{fig:PPO}} \hspace{-0.45cm} &
			\subfigure[TD3]{\includegraphics[width=0.33\textwidth]{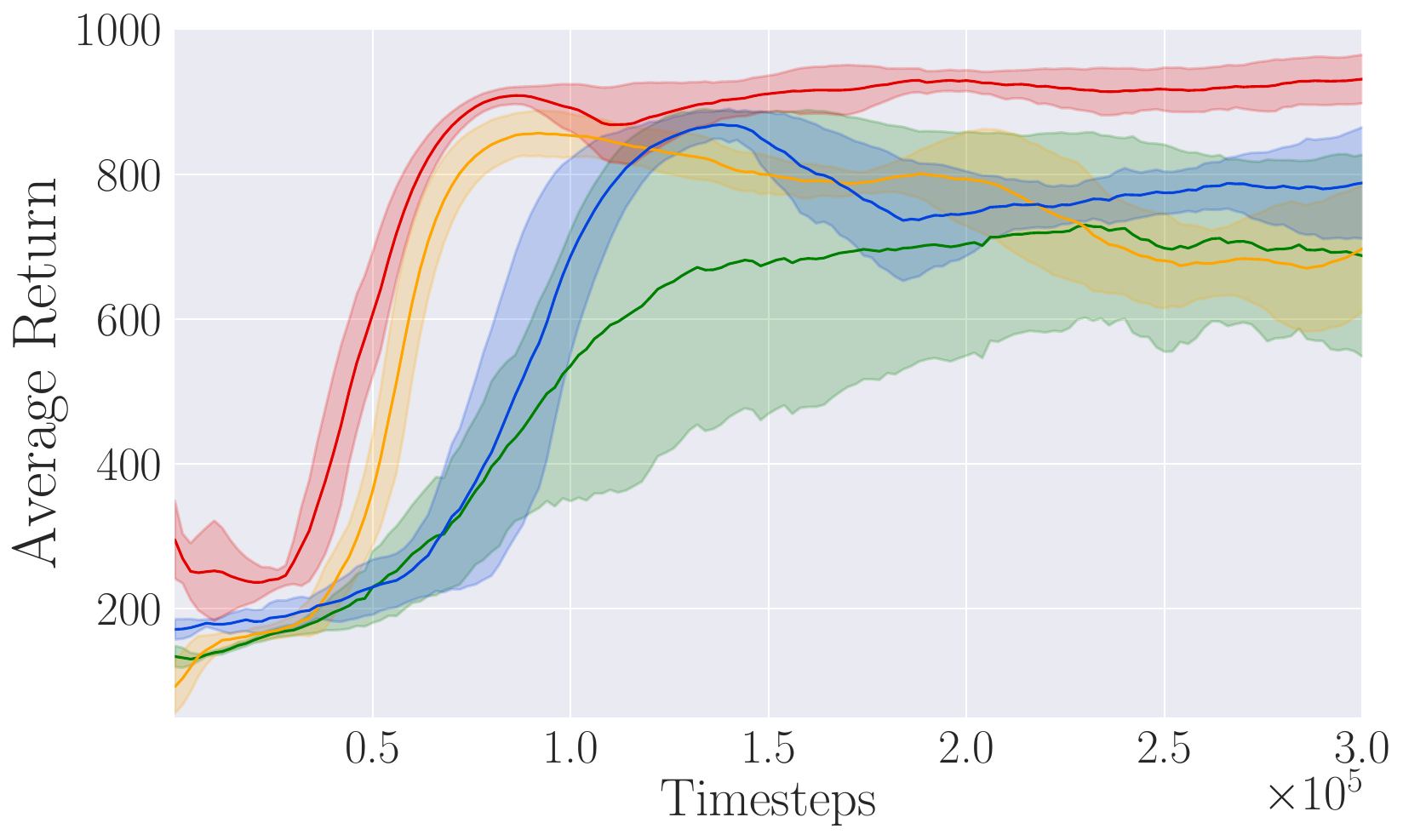} \label{fig:TD3}} \hspace{-0.45cm} &
			\subfigure[SAC]{\includegraphics[width=0.33\textwidth]{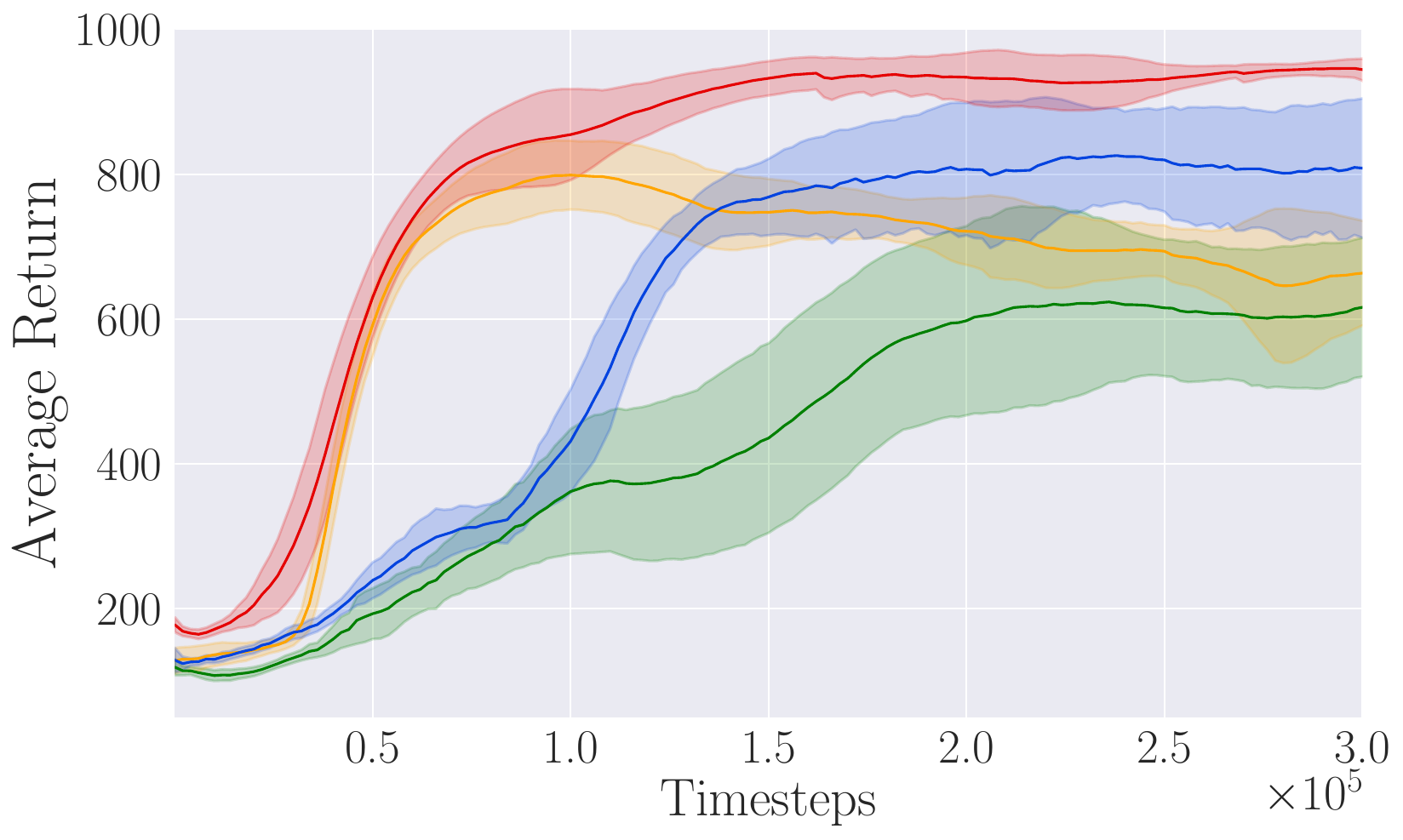} \label{fig:SAC}} \hspace{-0.45cm} &
		\end{tabular}
		\vspace*{-0.2cm}
		\caption{Benchmarks of four RL frameworks trained with (a) PPO, (b) TD3, and (c) SAC RL algorithms.
			Each plot depicts the learning curves for Mono-MLP (green), Mono-\textbf{EMLP} (blue), Mod-MLP (orange), and Mod-\textbf{EMLP} (red).}
		\label{fig:learning_curves}
		\vspace*{-0.2cm}
	\end{figure*}
	
	% -------------- Subsubsection -------------- %
	\paragraph*{Hyperparameters}
    The weighting factors in the reward are chosen as follows.
	To prioritize accurate trajectory tracking, the coefficients for position and yaw errors are assigned higher values, specifically $k_v = k_{b_1} = 6.0$.
    And other factors are chosen as	$k_v = 0.4$, $k_\Omega = k_{\omega_{12}} = 0.6$, and $k_{\Omega_3} = 0.1$, respectively.
	For the integral terms, both $k_{I_x}$ and $k_{I_{b_1}}$ were set to 0.1, with $\alpha = 0.1$ and $\beta = 0.05$, respectively.
	For action policy regularization, we set $\lambda_T = 0.4$, $\lambda_S = 0.3$, and $\lambda_M = 0.6$, respectively.
	To further enhance training robustness and prevent overfitting, we adopted the SGDR learning rate scheduler~\cite{loshchilov2016sgdr} and linearly decayed the exploration noise.
    The other hyperparameters for training are summarized at \Cref{tab:hyper_params}.
	%Unless explicitly stated, the monolithic and modular frameworks share most training parameters.
		
	% -------------- Subsection -------------- %
	\subsection{Training Curves} 

	\Cref{fig:learning_curves} presents the learning curves of the four architectures, namely Mono-MLP (green), Mod-MLP (orange), Mono-EMLP (blue), and Mod-EMLP (red), for each of three RL techniques of PPO, TD3, and SAC, where the average return is illustrated over training timesteps for 10 random seeds.
	During evaluation, the evaluation reward is computed by $r_\textrm{eval} = -\|e_x\| - |e_{b_1}|$ at each step and then normalized to the range $[0, 1]$ to ensure fair comparisons across all frameworks.
	The resulting average returns are reported at every 2,000 timesteps over 10 test trajectories without exploration noise.
	The curves are smoothed using an exponential moving average with a smoothing weight of 0.85, where solid lines and shaded areas represent the mean and the $2\sigma$ bounds across random seeds, respectively.
	
    \paragraph*{Equivariance vs. Non-equivariance}
    These results demonstrate that both EMLP-based models, Mod-EMLP (red) and Mono-EMLP (blue), converge faster and achieve higher returns than their respective MLP-based counterparts, Mod-MLP (orange) and Mono-MLP (green).
    This underscores the advantage of leveraging equivariant learning mechanisms, which enable RL models to capture the inherent symmetries of the quadrotor control problem, thereby enhancing sample efficiency and generalization capability.
    Notably, Mod-EMLP outperforms all other frameworks early in training, achieving the highest returns with fewer samples.
    In contrast, the MLP-based architectures, Mono-MLP and Mod-MLP, exhibit slower learning rates and require more timesteps to achieve comparable rewards, highlighting the limitations of non-equivariant configurations for complex control tasks.

    \paragraph*{Monolithic vs. Modular}
    By incorporating equivariant learning within a modular design, Mod-EMLP achieves the most efficient policy learning, resulting in superior early-stage performance.
    This demonstrates the advantage of the modular frameworks (red and orange), where two agents learn translational and yawing motions in parallel, accelerating convergence and enhancing flight performance.
    Conversely, monolithic architectures (blue and green) show slower progress compared to their modular counterparts, leading to suboptimal performance and overfitting during extended training periods.
    This slower convergence is likely due to the inherent challenge of training monolithic policies, which require simultaneous learning of all control aspects.

	% -------------- Subsection -------------- %
	\subsection{Flight Experiments}	
	
	\begin{figure}[t]
		\centering
		\includegraphics[width=0.47\textwidth]{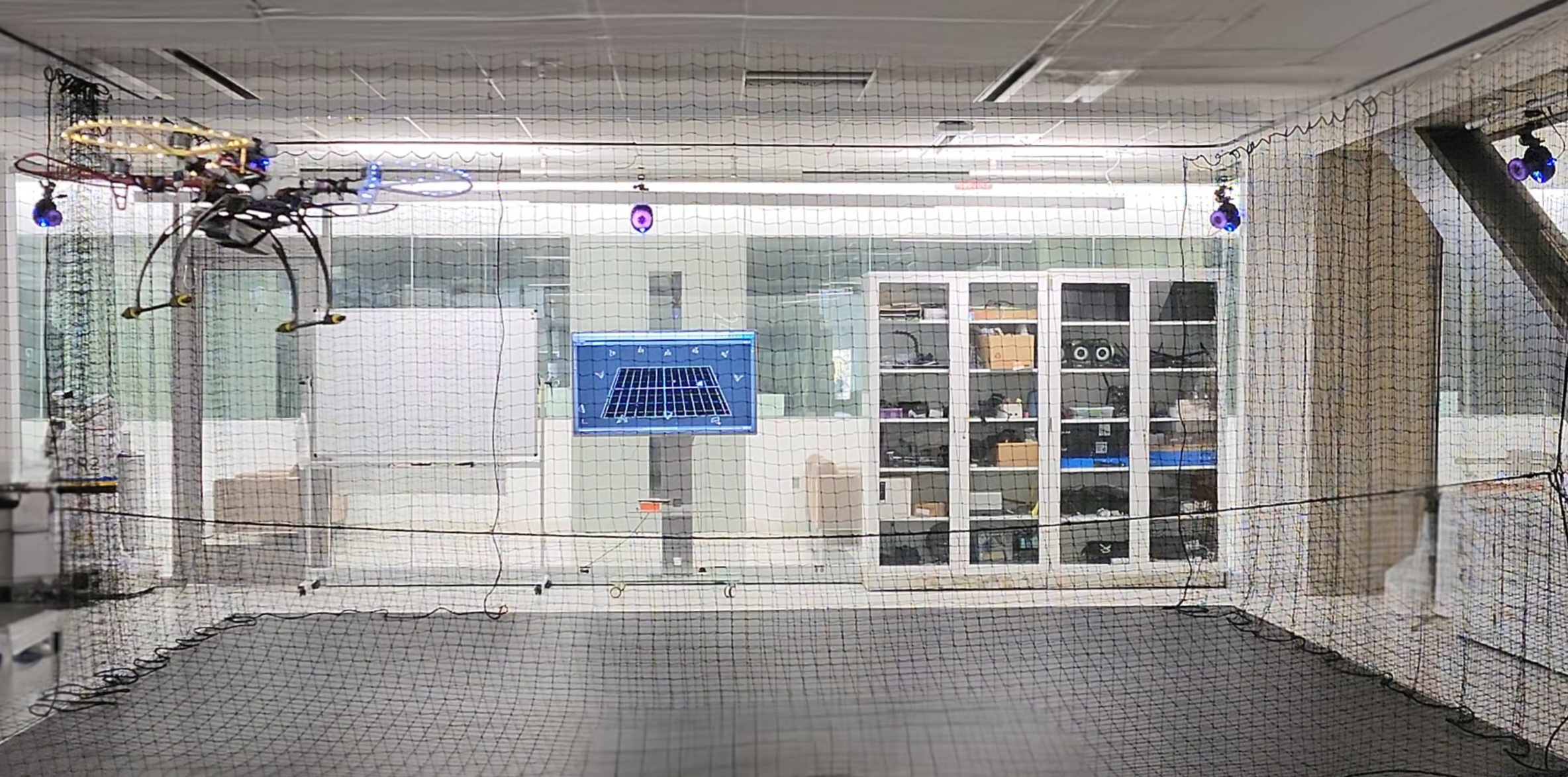}
		\caption{
			Experimental setup for indoor flight testing at the Flight Dynamics and Control Lab, GWU.
			A custom-designed quadrotor UAV, developed for sim-to-real transfer, is tracked by a 12-camera Vicon Valkyrie VK-8 motion capture system. 
			Position and attitude measurements are transmitted to the on-board NVIDIA Jetson TX2 at 200 Hz via Wi-Fi.}
		\label{fig:gwu_fdcl}
		\vspace*{-0.2cm}
	\end{figure}

	To evaluate the flight performance of the proposed equivariant reinforcement learning frameworks, we compared their performance with their non-equivariant baselines through a series of flight experiments in both numerical simulations and the real world.
	
	Frist, we developed the simulation environment in Python based on the quadrotor equations of motion \eqref{eqn:x_dot}--\eqref{eqn:W_dot}, which are discretized using a numerical integration scheme using the OpenAI Gym library \cite{brockman2016openai}, a popular toolkit for RL training environment development. 
	The PPO, TD3, and SAC algorithms were implemented using PyTorch \cite{paszke2019pytorch} for efficient training and deployment of RL models.
	The simulator operates at a frequency of 200 Hz, matching the real-world setup to facilitate the seamless transfer of policies from simulation to hardware.

	Next, a hardware platform was developed in the Flight Dynamics and Control Lab to deploy and test trained RL control policies in real-world environments, integrating a flight computer, sensors, and actuators.
	Connections between these components are facilitated by a custom-designed PCB, ensuring proper voltage regulation and signal integrity. 
	An \textit{NVIDIA Jetson TX2} serves as the onboard flight computer, managing sensor data processing, state estimation, and control computations. 
	For motion measurement, a Vicon motion capture system with twelve \textit{Valkyrie VK-8} cameras tracks reflective markers on the quadrotor frame for precise position and attitude estimation during indoor tests, transmitting data to the onboard computer via Wi-Fi at 200 Hz. 
	Additionally, a \textit{VectorNav VN100} inertial measurement unit (IMU) provides acceleration and angular velocity data at 200 Hz.
	The actuator system comprises four \textit{T-Motor 700KV} brushless DC motors paired with \textit{MS1101} polymer propellers, powered by a 14.8V 4-cell LiPo battery. 
	Motor commands are sent via I2C communication to \textit{MikroKopter BL-Ctrl v2} electronic speed controllers (ESCs), which regulate motor speeds effectively.
	
	In addition, the custom flight software was designed for robust and reliable operation, consisting of sensor processing, state estimation, and control modules. 
	A multi-threaded architecture in C++ is used for sensor and estimator operations, ensuring real-time performance. 
	Sensor measurements are processed asynchronously by dedicated threads and buffered in a thread-safe first-in, first-out (FIFO) queue.
	An Extended Kalman Filter (EKF), running in a separate thread, fuses sensor raw data to estimate the state vector $(\hat{x}, \hat{v}, \hat{R}, \hat{\Omega})$, which is then shared with the RL control module and the ground station.
	The RL controller, implemented in Python, deploys pre-trained PyTorch models and interfaces with C++ modules through ROS2.
	This module computes motor commands based on the estimated states and user-provided commands.
	A ground station serves as a communication hub, sending trajectory commands and receiving flight data for real-time monitoring. 
	This integrated architecture minimizes latency and maximizes robustness, facilitating effective sim-to-real policy transfer.
	
	\paragraph*{Flight Performance Comparison}
	First, we assess the trajectory tracking performance of each framework at a desired yaw rate of 20 $\mathrm{deg/s}$ in both simulation and real-world environments.
	\Cref{tab:comparison} provides a quantitative summary of the average root-mean-square errors (RMSEs) in position $\bar{e}_x$, velocity $\bar{e}_v$, angular velocity $\bar{e}_\Omega$, and heading $\bar{e}_{b_1}$.
	Additionally, we report the average total force $\bar{f}$ and maximum yawing moment $\max |M_3|$, which are critical metrics for stability and energy efficiency.

	\Cref{fig:Lissajous} further visualizes a figure-eight Lissajous trajectory of the real-world flight, with the reference trajectory shown as black-dotted curves.
	The subplots compare the tracking accuracy of all frameworks, with blue hues corresponding to smaller yawing errors and green hues to larger errors. 

	\begin{table*}[t]
		\centering
		\setlength{\tabcolsep}{1.1em} % for the horizontal padding
		\renewcommand{\arraystretch}{1.35}  % for the vertical padding
		\caption{Comparison of flight performance for each RL framework for the desired yaw rate of 20 $\mathrm{deg/s}$, quantified by the root mean squared error (RMSE), in position, $\bar{e}_x$, velocity, $\bar{e}_v$, angular rate, $\bar{e}_\Omega$, and heading, $\bar{e}_{b_1}$, alongside the average total force, $\bar{f}$, and moments, $\bar{M}$. 
			Best numbers are indicated in bold.}
		\label{tab:comparison}
		\begin{tabular}{c|c|c|c|c|c|c|c}
			\toprule
			Framework & Environment & $\bar{e}_x (\mathrm{cm})$ & $\bar{e}_v (\mathrm{cm/s})$ & $\bar{e}_\Omega (\mathrm{deg/s})$ & $\bar{e}_{b_1} (\mathrm{deg})$ & $\bar{f} (\mathrm{N})$ & $\max |M_3| (\mathrm{Nm})$ \\ \midrule
			& Simulation & 14.50 & 20.04 & 9.69 & 4.47 & 21.49 & 0.126 \\
			\multirow{-2}{*}{Monolithic Policy with MLP} & \cellcolor[HTML]{EFEFEF}Real World & \cellcolor[HTML]{EFEFEF}14.63 & \cellcolor[HTML]{EFEFEF}18.15 & \cellcolor[HTML]{EFEFEF}10.28 & \cellcolor[HTML]{EFEFEF}7.15 & \cellcolor[HTML]{EFEFEF}22.75 & \cellcolor[HTML]{EFEFEF}0.134 \\ \midrule
			& Simulation & 11.08 & 14.19 & 7.48 & 0.35 & 21.46 & 0.012 \\ \cline{3-4}
			\multirow{-2}{*}{Modular Policies with MLP} & \cellcolor[HTML]{EFEFEF}Real World & \cellcolor[HTML]{EFEFEF}10.95 & \cellcolor[HTML]{EFEFEF}14.48 & \cellcolor[HTML]{EFEFEF}\textbf{9.59} & \cellcolor[HTML]{EFEFEF}4.46 & \cellcolor[HTML]{EFEFEF}22.83 & \cellcolor[HTML]{EFEFEF}0.066 \\ \midrule
			& Simulation & 9.44 & 12.50 & 7.67 & 1.30 & 21.43 & 0.016 \\
			\multirow{-2}{*}{Monolithic Policy with EMLP} & \cellcolor[HTML]{EFEFEF}Real World &	\cellcolor[HTML]{EFEFEF}9.70 & \cellcolor[HTML]{EFEFEF}14.91 & \cellcolor[HTML]{EFEFEF} 11.55 & \cellcolor[HTML]{EFEFEF}5.70 & \cellcolor[HTML]{EFEFEF}22.68 & \cellcolor[HTML]{EFEFEF}0.088\\ \midrule
			& Simulation & \textbf{7.54} & \textbf{10.16}& \textbf{7.02} & \textbf{0.29} & 21.41 & 0.012 \\
			\multirow{-2}{*}{Modular Policies with EMLP} & \cellcolor[HTML]{EFEFEF}Real World &\cellcolor[HTML]{EFEFEF}\textbf{8.44}& \cellcolor[HTML]{EFEFEF}\textbf{12.52} & \cellcolor[HTML]{EFEFEF}12.93 & \cellcolor[HTML]{EFEFEF}\textbf{4.33} & \cellcolor[HTML]{EFEFEF}22.99 & \cellcolor[HTML]{EFEFEF}0.080\\ \bottomrule
		\end{tabular}
		\vspace*{-0.3cm}
	\end{table*}
	\begin{figure*}[t] 
		\centering
		\begin{tabular}{cccc}
			\hspace{-0.5cm}
			\subfigure{\includegraphics[width=0.24\textwidth]{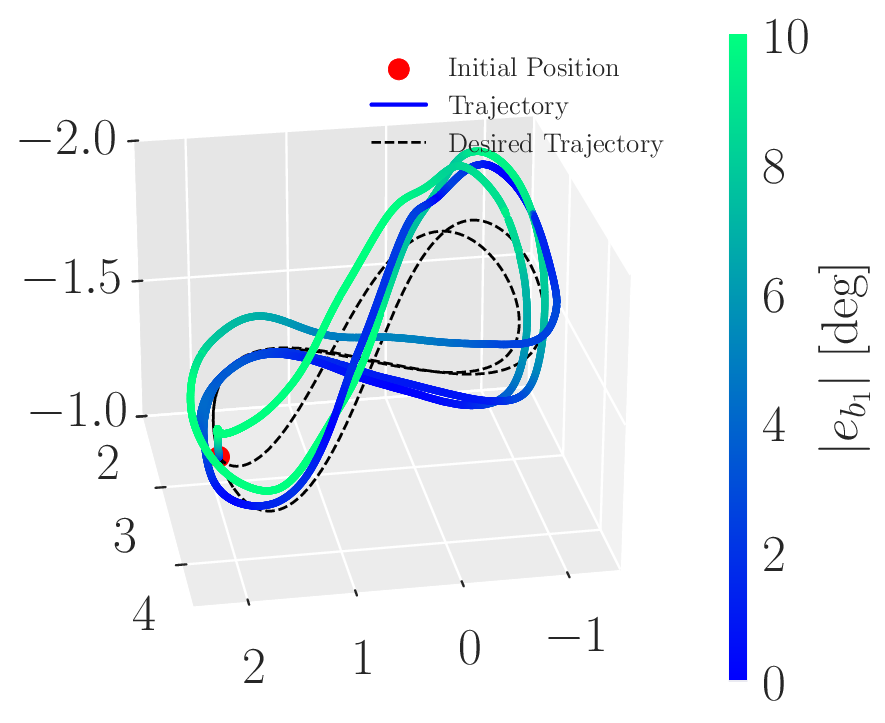} \label{fig:XYZ_mono_mlp}} \hspace{-0.7cm} &
			\subfigure{\includegraphics[width=0.24\textwidth]{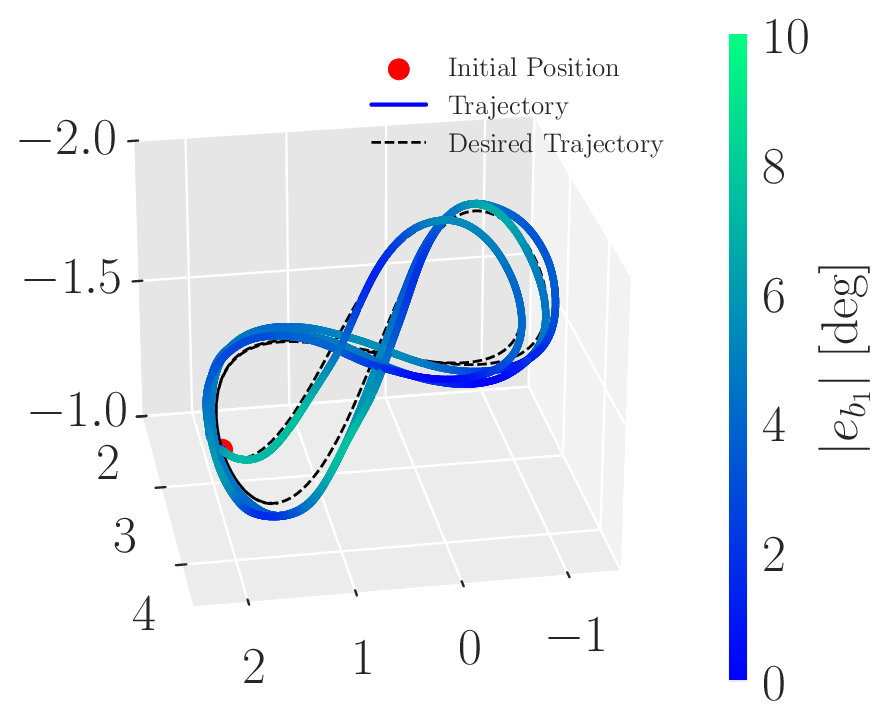} \label{fig:XYZ_mod_mlp}} \hspace{-0.7cm} &
			\subfigure{\includegraphics[width=0.24\textwidth]{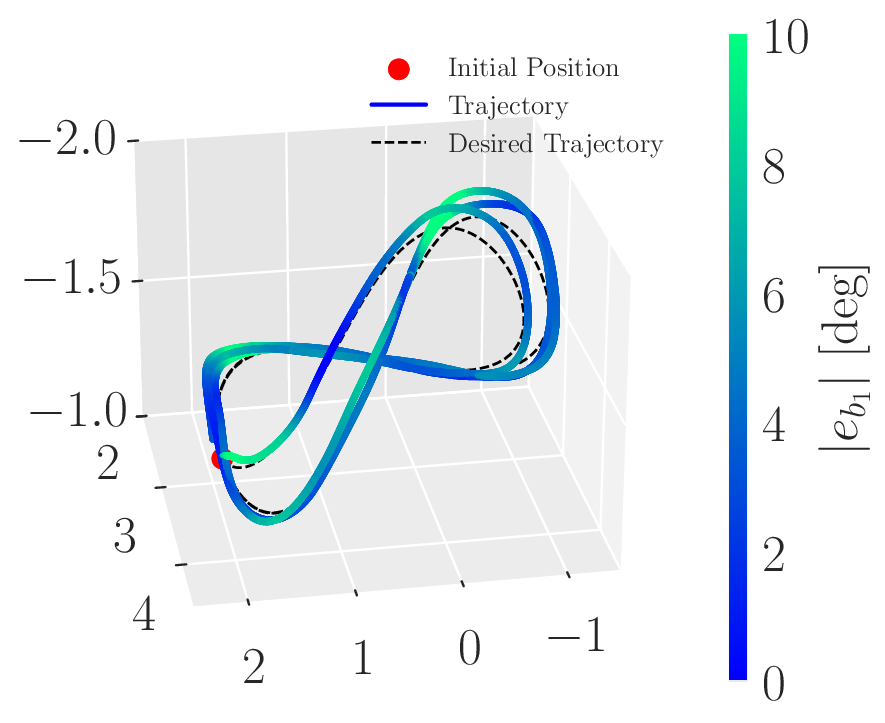} \label{fig:XYZ_mono_emlp}} \hspace{-0.7cm} &
			\subfigure{\includegraphics[width=0.24\textwidth]{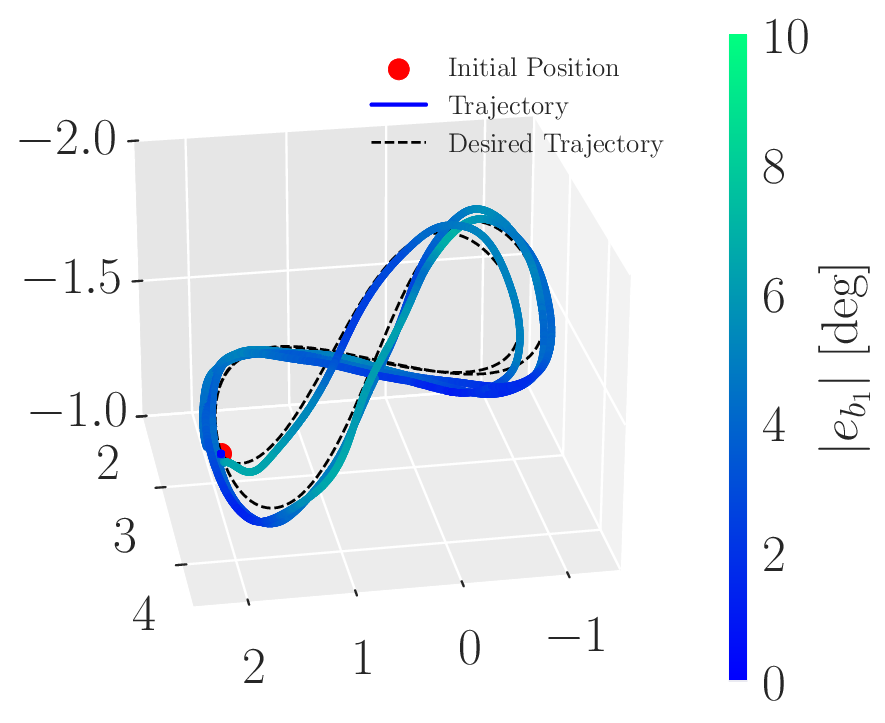} \label{fig:XYZ_mod_emlp}} \hspace{-0.7cm} \\
			\hspace{-1.1cm} \makebox[0.24\textwidth]{\footnotesize{(a) Mono\_MLP}} \hspace{-0.7cm} &
			\hspace{-0.7cm} \makebox[0.24\textwidth]{\footnotesize{(b) Mod\_MLP}} \hspace{-0.7cm} &
			\hspace{-0.5cm} \makebox[0.24\textwidth]{\footnotesize{(c) Mono\_EMLP}} \hspace{-0.7cm} &
			\hspace{-0.5cm} \makebox[0.24\textwidth]{\footnotesize{(d) Mod\_EMLP}} \hspace{-0.7cm} 
		\end{tabular}
		\caption{Real-world flight trajectory of different frameworks corresponding to \Cref{tab:comparison} results. 
			Each plot depicts three laps starting from the same location (red dot), with the reference trajectory shown as black dotted curves.
			The yawing errors are visualized using a colormap, where blue hues correspond to smaller errors and green hues to larger errors.}
		\label{fig:Lissajous}
		\vspace*{-0.2cm}
	\end{figure*}
	\begin{figure*}[t] 
		\centering
		\hspace*{-0.2cm}
		\begin{tabular}{cc}
			\subfigure[$x,v,\Omega$]{\includegraphics[width=0.49\textwidth]{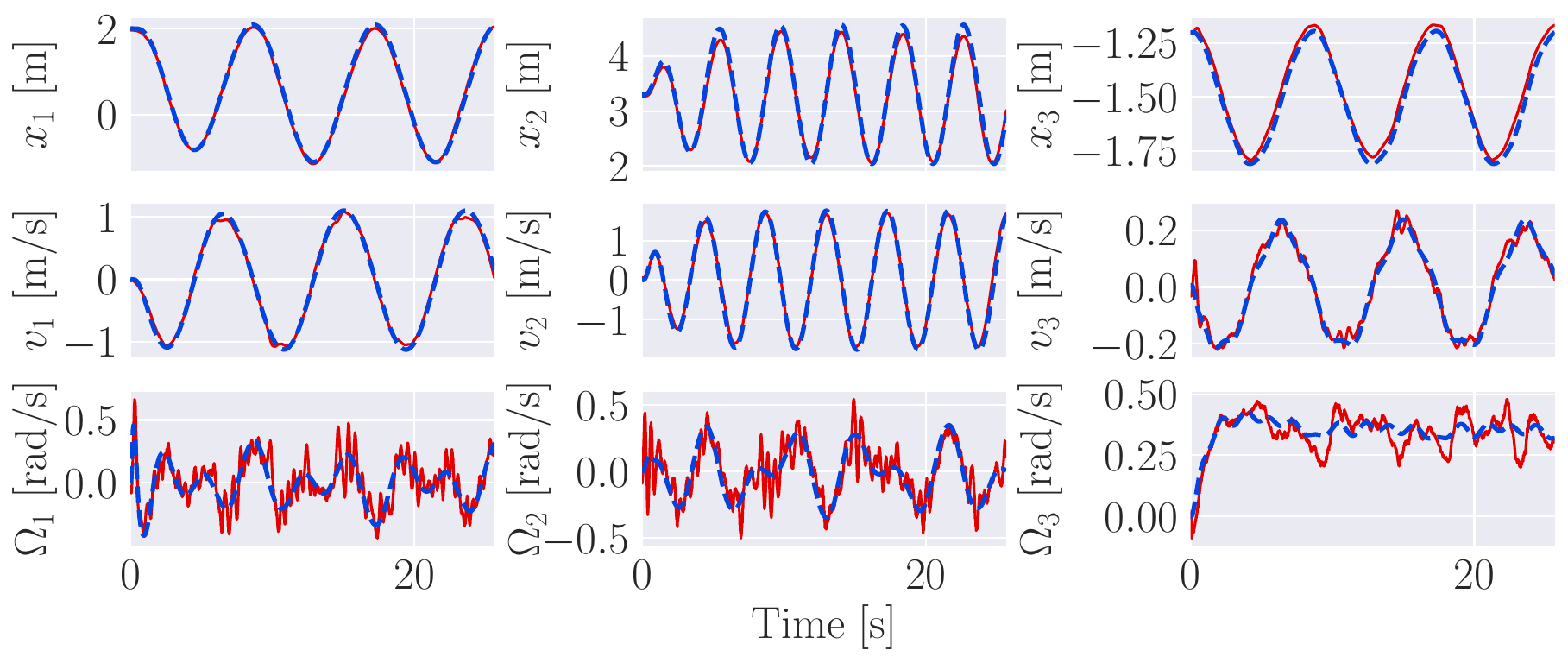} \label{fig:s2r_x_v_W}} \hspace{-0.4cm} &
            \subfigure[{$R=[b_1, b_2, b_3]$}]{\includegraphics[width=0.49\textwidth]{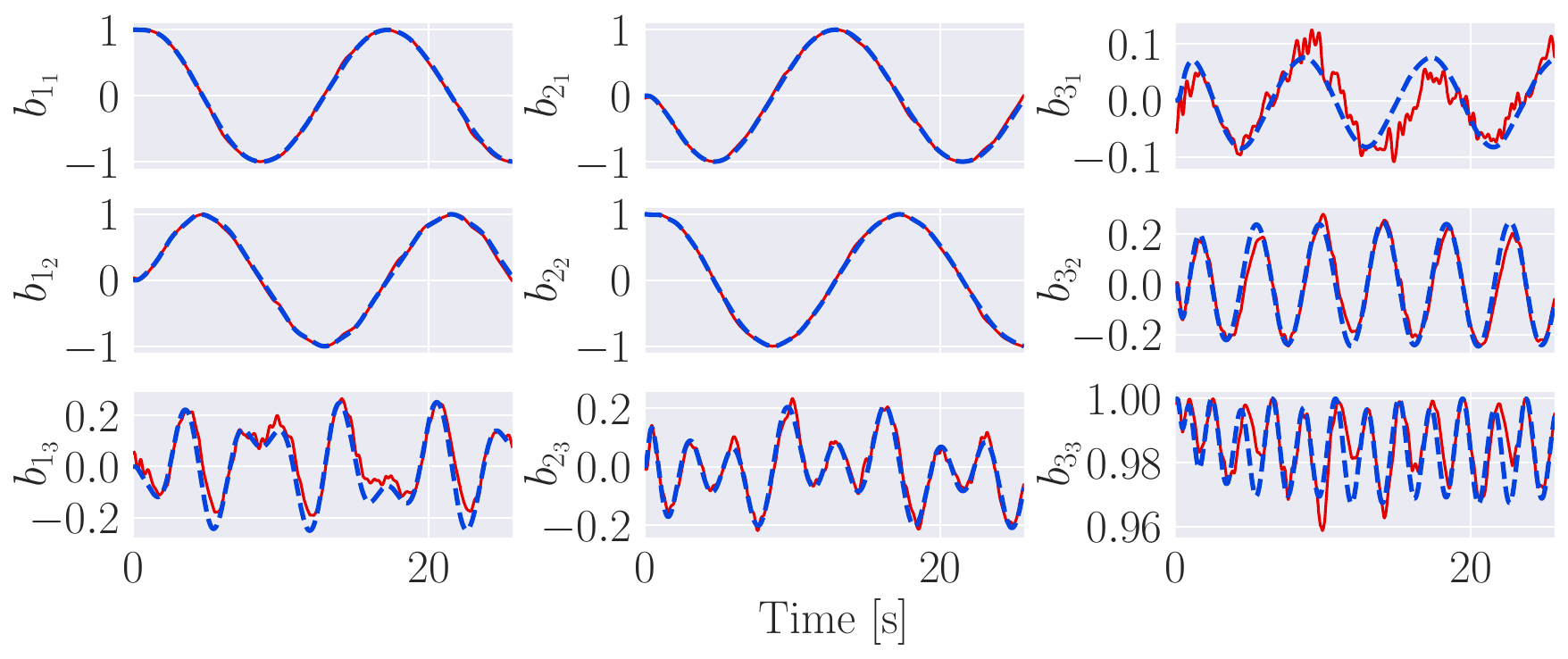} \label{fig:s2r_R}}
		\end{tabular}
		\vspace*{-0.2cm}
		\caption{Validation of zero-shot sim-to-real transfer through comparison of simulated (blue dotted lines) and actual (red lines) flight trajectories. 
			Subfigures (a) show position, velocity, and angular velocity trajectories, while subfigures (b) depict the nine elements of attitude $R$.}
		\label{fig:sim2real}
		\vspace*{-0.2cm}
	\end{figure*}

	It is shown that Mod-EMLP achieves the smallest tracking errors in both position and heading angle, followed by Mono-EMLP, in numerical simulations and flight experiments. 
	This illustrates that the proposed equivariant RL framework not only improves training efficiency but also enhances control performance.
	These findings highlight the advantages of leveraging equivariant learning mechanisms to capture the underlying symmetries of the control problem, resulting in more precise and efficient trajectory tracking.
	
	A further comparison between Mono-EMLP and Mod-EMLP provides additional insights into the benefits of a modular design.
	Although Mono-EMLP demonstrates effective position tracking, it exhibits noticeable yaw errors during real-world flights, as indicated by greener hues in \Cref{fig:XYZ_mono_emlp}.
	This underscores the challenges associated with monolithic architectures in simultaneously controlling roll, pitch, and yaw dynamics, which inherently have distinct characteristics.
	Such limitations often result in suboptimal performance and reduced robustness, as a single agent must handle all control aspects in a coupled manner.
	
	In contrast, Mod-EMLP separates the yaw dynamics from the translational motion dynamics, structuring them as separate modules. This ensures that changes in one module (e.g., perturbations affecting yaw) do not propagate to the other.
	This structural advantage enhances robustness and fault tolerance, as each agent specializes in its designated subtask.
	As a result, Mod-EMLP maintained more precise trajectory tracking with smaller yaw errors, as evident in \Cref{fig:XYZ_mod_emlp}, where the plot color is predominantly blue, emphasizing the benefits of the modular framework.
		
	\paragraph*{Zero-Shot Sim-to-Real Transfer}
	Lastly, to demonstrate the practical utility of our proposed frameworks, we validate their zero-shot sim-to-real transfer capabilities by comparing simulated and real-world flight trajectories.
	This capability is crucial for safely deploying policies trained in simulation to real-world environments, where real-world training poses significant risks to the quadrotor aerial vehicle.
	
	In particular, \Cref{fig:sim2real} illustrates the validation results of Mod-EMLP. Subfigure (a) shows the position, velocity, and angular velocity trajectories, while subfigure (b) presents the nine elements of the rotation matrix $R$.
	These results demonstrate the consistency between simulated and real-world flight trajectories, overcoming discrepancies in real-world conditions such as sensor noise and aerodynamic disturbances.
	This underscores the effectiveness of our sim-to-real strategies, including domain randomization and policy regulation, highlighting their practical applicability.

	% ========================================================== %
	%                         Conclusion                         %
	% ========================================================== %
	\section{Conclusions} \label{sec:Conc}	
	This paper presents data-efficient equivariant reinforcement learning strategies for quadrotor control by leveraging the inherent symmetries of quadrotor dynamics.
	By embedding rotational and reflectional symmetries directly into both monolithic and modular RL frameworks, the proposed methods significantly reduce the dimensionality of the state-action space, leading to faster convergence to optimal policies.
	Experimental results demonstrate that these frameworks outperform non-equivariant baselines in terms of sample efficiency and control performance.
	By validating the advantages of symmetry-aware learning in both simulation and real-world settings, this work provides valuable insights into the application of geometric deep learning, laying the groundwork for broader adoption of equivariant RL frameworks across diverse robotic systems.

	\bibliography{BibTeX}
	\bibliographystyle{IEEEtran}

\end{document}